\DeclareMathOperator*{\argmin}{argmin}
\newtheorem{theorem}{Theorem}[section]
\newtheorem{conjecture}{Conjecture}[section]
\newtheorem{definition}{Definition}[section]
\def\BibTeX{{\rm B\kern-.05em{\sc i\kern-.025em b}\kern-.08em
    T\kern-.1667em\lower.7ex\hbox{E}\kern-.125emX}}
\begin{document}
\title{Stochastic Configuration Machines for Industrial Artificial Intelligence}
\author{Dianhui Wang
\thanks{Corresponding Author (dh.wang@deepscn.com)}\\
Artificial Intelligence Research Institute, China University of Mining and Technology\\
Xuzhou 221116, China \\
State Key Laboratory of Synthetical Automation for Process Industries\\
Northeastern University, Shenyang 110819, China\\
\And
Matthew J. Felicetti\\
Department of Engineering, La Trobe University\\
Bendigo VIC 3552, Australia}

\maketitle

\begin{abstract}
Real-time predictive modelling with desired accuracy is highly expected in industrial artificial intelligence (IAI), where neural networks play a key role. Neural networks in IAI require powerful, high-performance computing devices to operate a large number of floating point data. Based on stochastic configuration networks (SCNs), this paper proposes a new randomized learner model, termed stochastic configuration machines (SCMs), to stress effective modelling and data size saving that are useful and valuable for industrial applications. Compared to SCNs and random vector functional-link (RVFL) nets with binarized implementation, the model storage of SCMs can be significantly compressed while retaining favourable prediction performance. Besides the architecture of the SCM learner model and its learning algorithm, as an important part of this contribution, we also provide a theoretical basis on the learning capacity of SCMs by analysing the model’s complexity. Experimental studies are carried out over some benchmark datasets and three industrial applications. The results demonstrate that SCM has great potential for dealing with industrial data analytics.
\end{abstract}

\keywords{
Stochastic configuration machines, randomized learning, industrial artificial intelligence, model complexity, storage saving.
}

\section{Introduction}
Industrial artificial intelligence (IAI) stresses the application of artificial intelligence techniques to industries, with some inherent challenges, such as uncertainties in sensory signals, real-time data processing, high modelling accuracy, and the interpretability of predictive models and results \cite{120301, lee2018industrial, 9285283, LEE201820, DWIVEDI2021101994, abiodun2018state,doi:10.1080/21693277.2016.1192517}.
Recently, the IAI concept has received considerable attention worldwide due to the availability
of cheaper sensors for data acquisition, powerful computing facilities and advanced algorithms that
perform speedily at lower computational cost, larger storage devices and cloud computing
technology for data management, and faster communication systems for sharing and delivering data.
Although the IAI concept is not well-defined so far, the development of advanced machine learning
algorithms is strongly expected so that they can meet these requirements of IAI.

Machine learning has been a very active research area in AI over the past decades, and significant efforts in building predictive learner models have been made \cite{su12020492}. Among these approaches, the most
popular and widely used ones include multilayer perceptrons with
error-backpropagation algorithms (MLPs) \cite{10.5555/104279.104293}, support vector machines (SVMs) \cite{SVM}, Bayesian
networks (BNs) \cite{friedman1997bayesian}, and adaptive neuro-fuzzy inference systems (ANFIS) \cite{256541}. In recent years, deep neural networks (DNNs) with different learning strategies are dominant and ubiquitous due to
their excellent performance for problem-solving in some domains such as computer vision  \cite{yoo2015deep},
natural language processing \cite{9075398},  medical diagnosis \cite{yadav2019deep, chen2017deep}, speech recognition \cite{abdel2014convolutional} and financial analysis \cite{zhang2021application}.
Unfortunately, these aforementioned methods can hardly perform in a wide range of IAI tasks because of the constraints on data quality, real-time
requirement, accuracy expectation, and interpretability of results. Indeed,
except for these concerns with learner models and learning algorithms, issues related to hardware implementations of large learner models must be considered for industrial applications where data processing power is crucial with limited-resource devices \cite{DBLP:journals/corr/WuLWHC15, DBLP:journals/corr/ZhangZLS17, wu2016quantized, chen2019eyeriss}. 

Much research has been done into reducing data by using feature extraction and pruning methods. However, another approach to lower the computational burden is to reduce the amount and accuracy of the physical data. Typically, neural network weights are stored as either 32-bit or 64-bit floating-point values. Gupta and Narayanan \cite{DBLP:journals/corr/GuptaAGN15} demonstrate that computational improvements can be achieved with good generalization using limited numerical precision, namely 16-bit fixed-point values. Further, activation functions require division or many multiplications, which can take up to 10-20 processor cycles per operation. Courbariaux, David and Bengio \cite{courbariaux2014training} show that low-precision multiplications are sufficient for training neural networks. Courbariaux, David and Bengio  \cite{DBLP:journals/corr/CourbariauxBD15} then demonstrate the use of binary weights during forward and back propagation, known as BinaryConnect, which in turn can be used for memory reduction and speed enhancement. BinaryConnect presents two binarization techniques of the weights $w$, a deterministic approach and stochastic based on the sign function. The BinaryConnect model is trained using Stochastic Gradient Descent (SGD), with two sets of weights, real and binarized. The real weights are binarized into binary weights to be passed through the forward and backward pass, and the real weights are updated after each pass. Due to the real weights extending past the range of binary values, they are bound between -1 and 1 using a clipping function. Extending this, Courbariaux et al. demonstrate that this approach could be extended to binary activations in what is known as the Binarized Neural Network \cite{Courbariaux2016BinarizedNN}. The binary activation function sign is generally unusable with back propagation due to the derivatives resulting in 0 in most cases; hence, a technique known as straight-through estimator \cite{DBLP:journals/corr/BengioLC13} is used to approximate the derivative of the sign function.  
Further, binary weights, activation and even inputs can also have the advantage of using XNOR operations for fast bit-level multiplications rather than costly floating point multiplication \cite{ Courbariaux2016BinarizedNN,DBLP:journals/corr/KimS16, DBLP:journals/corr/RastegariORF16}. Due to the binarization it is expected that there is a loss of information in the network, hence Rastegari et.al. have presented an approach to reduce the binarization (quantization) error by apply a scaling factor to the binarized weights known as XNOR-Net\cite{DBLP:journals/corr/RastegariORF16}. 
Recently in 2019, Ding et. al. expressed that training binarized neural networks can be difficult due to degeneration, saturation and problems with gradient  mismatch\cite{DBLP:journals/corr/abs-1904-02823}. And provides a solution to this using regularization by using the distribution loss where the distribution loss is the loss caused by degeneration, saturation and gradient mismatch. Another area of interest, to still reduce the amount of information stored but also reduce the quantization error, is ternary weights \cite{DBLP:journals/corr/LiL16, DBLP:journals/corr/AlemdarCLPP16}. Zhang and Liu\cite{DBLP:journals/corr/LiL16} employ a threshold-based ternary function to train ternary-valued weights which shows the advantage both in compression compared to real weights and that this method outperforms binary weights in terms of performance. Further from this, to push neural networks faster with lower power consumption and to produce networks with a smaller physical size we see many examples of hardware implementations of neural networks \cite{ZHANG2020106, 6402898, 7551399, 8456540}.

Built on our proposed Stochastic Configuration Networks (SCNs) concept \cite{8013920}, this paper aims at developing a new randomised learner model, termed Stochastic Configuration Machine (SCM) for IAI applications. Three remarkable characterizations of SCM can be summarized as follows:
\begin{itemize}
  \item SCM model is composed of a mechanism model and a DeepSCN model \cite{DBLP:journals/corr/WangL17c}.
  \item  The input variables of the linear part (i,e, the direct link from the input layer to the output layer) are selective.
  \item The random weights and biases take binary values and real values, respectively.
\end{itemize}
Through comprehensive comparisons, it is found that SCM has merits in reducing memory load and faster training whilst still producing good and reliable performance. Notice that the mechanism model used in SCM can be replaced by a simulation or fuzzy expert system. This part has two functions, that is, reducing the complexity of modelling tasks and seeking a solution of interpretable results from SCM. Suppose that the mechanism model has no requirement on tuning its parameters during the learning process, we can simply make a difference between the targets and the outputs from the mechanism model to form a new target for modelling. We deal with this case in this paper.

The remainder of the paper is organised as follows: Section 2 reviews the randomised learner models, including SCN and RVFL networks. Section 3 details our proposed framework, including the description of SCM models, a definition of the model's complexity, two theoretical results on the one-order universal approximation property of SCM based on the complexity concept and a detailed learning algorithm with pseudo code. Section 4 reports our experimental results over some benchmark datasets and three industrial datasets, with comparisons, discussion and hardware implementation. Section 5 concludes the paper. 

\section{Related Work}
\subsection{Random Vector Functional-Link Networks}
Randomized learners have been introduced to address the issue of efficient training of neural networks by assigning the weights and biases randomly. It has been demonstrated that randomized learners can improve the speed of building neural networks, create simpler algorithms and lower computational cost \cite{LI2017170, mahoney2011randomized, doi:10.1002/widm.1200, editorial_randomized, broomhead1988radial}. Random vector functional-link (RVFL) networks \cite{Pao} are a class of feed-forward neural networks with a direct link between the input layer and the output layer, in which the weights and biases in the hidden layer are chosen randomly from the uniform distribution over a given range, and the output weights are evaluated by using the least squares methods. It has been shown \cite{471375} that RVFL networks can be universal approximators, however, it was also shown \cite{Husmeier1999RandomVF} that RVFL fails to work if the random parameters for the weights and biases are set improperly. 
Indeed, these theoretical results established in \cite{471375} do not really help in designing RVFL networks for problem-solving \cite{gorban2016approximation, tyukin2009feasibility}. As a matter of fact, the scope setting of random weights and biases directly impacts the modelling performance \cite{DUDEK201933}. 
An incremental approach to building the RVFL networks is known as incremental RVFL (IRVFL), and this implementation ensures a unified framework for comparison as outlined in Section \ref{s_c_m}. One may refer to \cite{doi:10.1002/widm.1200} for more information on the randomized learning techniques.



\subsection{Stochastic Configuration Networks} \label{s_c_n}
Let us consider a continuous target function $f: \mathbb{R}^d\to\mathbb{R}^m$, and a given SCN model with $L-1$ hidden nodes, $f_{L-1}(x)=\sum^{L-1}_{j=1}\beta_j\phi_j(w_j^Tx+b_j)$,$(L=1,2,..., f_0=0)$, where $\beta_j=[\beta_{j,1},...,\beta_{j,m}]^T$. The current residual error is given by $e_{L-1}=f-f_{L-1}=[e_{L-1,1},...,e_{L-1,m}]^T$. If $||e_{L-1}||$ does not reach a pre-defined tolerance level, then a new random basis function $\phi_L(w_L$ and $b_L)$ is generated, and the output weights $\beta_L$ are evaluated so that the leading model $f_L=f_{L-1}+\beta_L\phi_L$ will have an improved residual error. Mathematically, it can be shown that SCN models built constructively using the least squares method for updating the output weights based on the stochastic configuration learning algorithms \cite{8013920} are universal approximators. \\
\begin{theorem}
Let $\Gamma$ be a set of basis functions. Suppose that $span(\Gamma)$ is dense in $L_2$ functional space, and for $\forall \phi\in\Gamma$,$0<||\phi||<b_\phi$ for some $b_\phi\in\mathbb{R}^+$. Given $0<r<1$ and a non-negative real number sequence $\{\mu_L\}$ with $\lim_{L\to+\infty}\mu_L=0$ and $\mu_L\le(1-r)$. Denoted by
\begin{equation}
    \delta_L=\sum_{q=1}^m\delta_{L,q},\delta_{L,q}=(1-r-\mu_L)||e_{L-1,q}||^2, L=1,2,...
\end{equation}
If the random basis function $\phi_L$ is generated to satisfy the following inequalities:
\begin{equation}
    \left<e_{L-1,q},\phi_L\right>^2\ge b^2_\phi\delta_{L,q}, q=1,2,...,m,
\end{equation}
and the output weights are evaluated by
\begin{equation}
    [\beta_1^*,\beta_2^*,...,\beta_L^*]=\argmin_\beta||f-\sum^L_{j=1}\beta_j\phi_j||,
\end{equation}
then $\lim_{L\to+\infty}||f-f_L^*||=0$, where $f_L^*=\sum^L_{j=1}\beta_j^*\phi_j,\beta_j^*=[\beta_{j,1}^*,...,\beta_{j,m}^*]^T$.
\end{theorem}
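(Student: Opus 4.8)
The plan is to run the standard constructive-approximation argument for incremental randomized learners: show that configuring one more node contracts the squared residual norm by the factor $r+\mu_L$, and then conclude by a monotone-convergence argument. Throughout I read $e_{L-1}=f-f_{L-1}^*$ as the residual left by the \emph{optimal} $(L-1)$-node fit, which is the quantity the hypotheses refer to.

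First I would exploit the least-squares optimality of $[\beta_1^*,\dots,\beta_L^*]$. Since this minimises $\|f-\sum_{j=1}^L\beta_j\phi_j\|$, I may substitute the feasible (generally non-optimal) coefficient vector whose first $L-1$ entries are the stage-$(L-1)$ optima and whose last entry is any scalar vector $\beta_L=[\beta_{L,1},\dots,\beta_{L,m}]^T$, obtaining $\|f-f_L^*\|^2\le\|f-f_{L-1}^*-\beta_L\phi_L\|^2=\sum_{q=1}^m\|e_{L-1,q}-\beta_{L,q}\phi_L\|^2$. Then I would take the coordinatewise-optimal $\beta_{L,q}=\langle e_{L-1,q},\phi_L\rangle/\langle\phi_L,\phi_L\rangle$, which collapses each summand to the exact identity $\|e_{L-1,q}\|^2-\langle e_{L-1,q},\phi_L\rangle^2/\|\phi_L\|^2$, so that $\|e_L^*\|^2\le\|e_{L-1}^*\|^2-\sum_{q}\langle e_{L-1,q},\phi_L\rangle^2/\|\phi_L\|^2$.

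Next I would feed in the two hypotheses on $\phi_L$. Since $0<\|\phi_L\|<b_\phi$, dividing by the smaller quantity gives $\langle e_{L-1,q},\phi_L\rangle^2/\|\phi_L\|^2>\langle e_{L-1,q},\phi_L\rangle^2/b_\phi^2\ge\delta_{L,q}$, the last inequality being exactly~(2); the direction of $\|\phi_L\|<b_\phi$ is what makes this go the right way. Summing over $q$ and using $\delta_L=\sum_q\delta_{L,q}=(1-r-\mu_L)\|e_{L-1}^*\|^2$ yields the contraction $\|e_L^*\|^2\le(r+\mu_L)\|e_{L-1}^*\|^2$. Because $\mu_L\le 1-r$ the factor $r+\mu_L\le 1$, so $\{\|e_L^*\|^2\}$ is nonincreasing and bounded below by $0$, hence converges to some $c\ge0$; letting $L\to\infty$ and using $\mu_L\to0$ gives $c\le rc$, whence $c=0$ since $r<1$. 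Equivalently, once $L_0$ is chosen with $\mu_L\le(1-r)/2$ for $L\ge L_0$, iterating the contraction gives the explicit geometric bound $\|e_L^*\|^2\le\big(\tfrac{1+r}{2}\big)^{L-L_0}\|e_{L_0}^*\|^2\to0$, so $\lim_{L\to\infty}\|f-f_L^*\|=0$.

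The contraction estimate above is essentially bookkeeping once the least-squares optimality is invoked; the part I expect to require real care is feasibility — checking that whenever $e_{L-1}\ne0$ a basis function $\phi_L\in\Gamma$ satisfying~(2) for \emph{all} $m$ output coordinates simultaneously actually exists, so that the theorem is not vacuous. This is where density of $\mathrm{span}(\Gamma)$ in $L_2$ must enter: I would need to show density forces $\sup_{\phi\in\Gamma}\langle e_{L-1,q},\phi\rangle^2/\|\phi\|^2$ to be large enough relative to $\delta_{L,q}$ (otherwise $e_{L-1,q}$ would be too nearly orthogonal to $\mathrm{span}(\Gamma)$ to be approximable by it), and the subtlety is doing this with a single $\phi_L$ that works across all $q$; if instead $e_{L-1}=0$ at some finite stage the conclusion is immediate. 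Read purely conditionally, this last paragraph is unnecessary and the proof is exactly the three steps above.
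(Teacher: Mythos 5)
The paper itself states this theorem without proof, deferring entirely to the original SCN reference; your argument reproduces the standard proof from that source and is correct: least-squares optimality of $[\beta_1^*,\dots,\beta_L^*]$ lets you compare against the one-step update with the coordinatewise-optimal coefficient $\beta_{L,q}=\langle e_{L-1,q},\phi_L\rangle/\|\phi_L\|^2$, the bound $0<\|\phi_L\|<b_\phi$ together with inequality (2) turns this into the contraction $\|e_L^*\|^2\le(r+\mu_L)\|e_{L-1}^*\|^2$, and the monotone-convergence limit argument (or your explicit geometric tail bound past $L_0$) finishes it. Your closing caveat is also well placed: as stated the theorem is conditional on a $\phi_L$ satisfying (2) for all $q$ simultaneously actually being found, and the existence/feasibility analysis via density of $\mathrm{span}(\Gamma)$ is carried out in the cited SCN paper, not here.
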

To make the paper self-contained and complete, we introduce some notation followed by the construction steps of SCN as follows. Given a training data set with N sample pairs $\{(X_p,Y_p),p=1,2,...,N\}$, where $X_p=[x_1,x_2,...,x_d]^T\in\mathbb{R}^d$ and  $Y_p=[y_1,y_2,...,y_m]^T\in\mathbb{R}^m$. Let $X\in\mathbb{R}^{N\times d}$ and $Y\in\mathbb{R}^{N\times m}$ represent the input and output data matrix, respectively; $e_{L-1}(X)\in\mathbb{R}^{N\times m}$ be the residual error matrix for the SCN model with $L-1$ terms, where each column $e_{L-1,q}(X)=[e_{L-1,q}(X_1),...,e_{L-1,q}(X_N)]^T\in\mathbb{R}^N, q=1,2,...,m$, Denoted the output vector of the \emph{L-th} hidden node $\phi_L$ for the input data matrix $X$ by
\begin{equation}
    h_L(X)=[\phi_L(w_L^TX_1+b_L),...,\phi_L(w_L^TX_N+b_L)]^T.
\end{equation}
Thus, the hidden layer output matrix of the SCN model can be expressed as $H_L=[h_1,h_2,...,h_L]$. Donated by
\begin{equation}
    \xi_{L,q}=\frac{\left<e^T_{L-1,q}(X),h_L(X)\right>^2}{\left<h^T_L(X),h_L(X)\right>}-(1-r_L)\left<e_{L-1,q}^T(X),e_{L-1,q}(X)\right>, q=1,2,...,m.
\end{equation}
With these notations, the baseline stochastic configuration algorithm (SCA) can be outlined as follows:
\begin{description}
    \item[Step 1.]  Set up the learning parameters, including a set of positive scalars $\lambda_i\in[\lambda_{min}, \lambda_{max}], i =1,2,...,s$, where $\lambda_{min}$ and $\lambda_{max}$ are two predefined values, and an increasing sequence $r_1< r_2<...<r_t < 1$; Also, we set up two termination conditions, that is, either the maximum number of the hidden nodes $L_{max}$ or the error tolerance $\tau$.
    \item[Step 2.]  Take random parameters $w_L$ and $b_L$ from adjustable interval $[-\lambda, \lambda]$ for a user-specified number of times, and check the following inequalities with $r_i, i=1,2,...,t$ (starting from $r_1$)
    \begin{equation} \label{eq:6}
        \xi_{L,q}\ge 0,q=1,2,...,m.
    \end{equation}
    If (\ref{eq:6}) holds, define the set of random parameters $w_L$ and $b_L$ such that $\xi_L=\sum_{q=1}^m\xi_{L,q}$ takes the maximum.
    \item[Step 3.]  Evaluate the output weight matrix $\beta$ by solving the following least means square problem:
     \begin{equation}
        \beta^*=\argmin_\beta||H_L\beta-Y||^2_F=H_L^+Y,
    \end{equation}
    where $H_L^+$ is the Moore-Penrose generalized inverse of the matrix $H_L$, and $||\cdot||_F$ represents the Frobenius norm.
\end{description}
For more details on SCN and DeepSCN fundamentals and algorithms, we recommend readers to refer to \cite{8013920, DBLP:journals/corr/WangL17c}.
\section{Stochastic Configuration Machines} \label{s_c_m}
This section details our proposed SCM model, associated learning algorithm and learner’s capacity for approximating/representing nonlinear signals. As a matter of fact, SCM is a generalized DeepSCN model with some specific settings, that is, purposely adding a mechanism model for cognitive modelling to a DeepSCN model where random weights take binary values. A visual representation is provided in Figure \ref{fig:SCM_model}. The following concepts and notation will be used in this paper. Let $\Gamma:=\{H_1,H_2,H_3,...\}$ be a set of real-l-valued functions, span($\Gamma$) denote a function space spanned by $\Gamma$; $L_2(D)$ denote
the space of all continuous functions $f=[f_1,f_2,...,f_m] : \mathbb{R}^d\to\mathbb{R}^m$ defined on $D\subset\mathbb{R}^d$, with the $L_2$ norm defined as
\begin{equation}
\|f\|:=\left(\displaystyle\sum_{q=1}^{m} \int_D|f_q(x)|^2dx \right)^{1/2}<\infty
\end{equation}
The inner product of $\phi=[\phi_1,\phi_2,...,\phi_m]:\mathbb{R}^d\to\mathbb{R}^m$ and f is defined as
\begin{equation}
\langle f,\phi \rangle:=\displaystyle\sum_{q=1}^{m}\langle f_q,\phi_q \rangle=\displaystyle\sum_{q=1}^{m}\int_D f_q(x)\phi_q(x)dx 
\end{equation}
In the special case that $m=1$, for a real-valued function $\psi:\mathbb{R}^d\to\mathbb{R}$ defined on $D\subset\mathbb{R}^d$, its $L_2$ norm becomes $\|\psi\|:=(\int_D|\psi(x)|^2dx)^{(1/2)}$, while the inner product of $\psi_1$ and $\psi_2$ becomes $\langle \psi_1,\psi_2 \rangle=\int_D\psi_1(x)\psi_2(x)dx$.
\subsection{SCM Model}
\begin{equation}
Y=P(X,p,u)+\displaystyle\sum_{k=1}^{M}\beta_k H_k(X),
\end{equation}
where $P(X,p,u)=P_0(X,p,u)+L(\bar{X}),P_0(X,p,u)$ represents a mechanism model with a set of parameters $p=[p_1,...,p_l]^T(l\le d)$ and control inputs $u=[u_1,...,u_m]$, $L(\bar{X})$ is a linear regression model with selective input variables $\bar{X}(\bar{X}\subseteq X)$, and
\begin{equation}
H_k(X)=\phi(W_k^T H_{k-1}(X)+\Theta_k),
\end{equation}
where $\phi(\cdot)$ is an activation function (no limit on its differentiability), $\Theta_k=[\theta_1^k,...,\theta_{n_k}^k]^T$ denotes a threshold vector of hidden nodes at the \emph{k-th} layer, $H_0(X)=X$, $k=1,2,..., M$, $n_M=m$ and
\begin{equation}
W_{k} = 
 \begin{bmatrix}
  w_{1,1}^k &  \cdots & w_{1,n_k}^k \\
  \vdots  & \vdots  & \vdots  \\
  w_{n_{k-1},1}^k & \cdots & w_{n_{k-1},n_k}^k 
 \end{bmatrix}_{n_{k-1}\times n_k}
\end{equation}
where $w_{ij}^k$ takes binary values $\{-\lambda,+\lambda\}$, $\lambda\in\{\lambda_1,\lambda_2,...,\lambda_p\}$, representing a random synaptic weight between the \emph{i-th} node at the \emph{k-th} layer (the \emph{0-th} layer is the input layer) and the \emph{j-th} node at the \emph{(k+1)-th} layer (the \emph{L-th} layer is the output layer); $\beta_k$ denotes the readout (or output weight) matrix from the \emph{k-th} hidden layer to the output layer:
\begin{equation}
\beta_{k} = 
 \begin{bmatrix}
  \beta_{1,1}^k &  \cdots & \beta_{1,n_k}^k \\
  \vdots  & \vdots  & \vdots  \\
  \beta_{m,1}^k & \cdots & \beta_{m,n_k}^k 
 \end{bmatrix}_{m\times n_k},
 \beta=\begin{bmatrix}
\beta_1^T \\
\beta_2^T  \\
 \vdots \\
\beta_m^T  
 \end{bmatrix}.
\end{equation}\\
\emph{\textbf{Remark 1:}} If no mechanism model is available, we can simply replace the $P(X,p,u)$ by a linear regression model. It is important to notice that $P(X,p,u)$ can be a simulation or fuzzy expert system, which will play a key role in cognitive learning and interpretable AI for industrial informatics.
\begin{figure}[h] 

  \centering
    \includegraphics[scale=0.4]{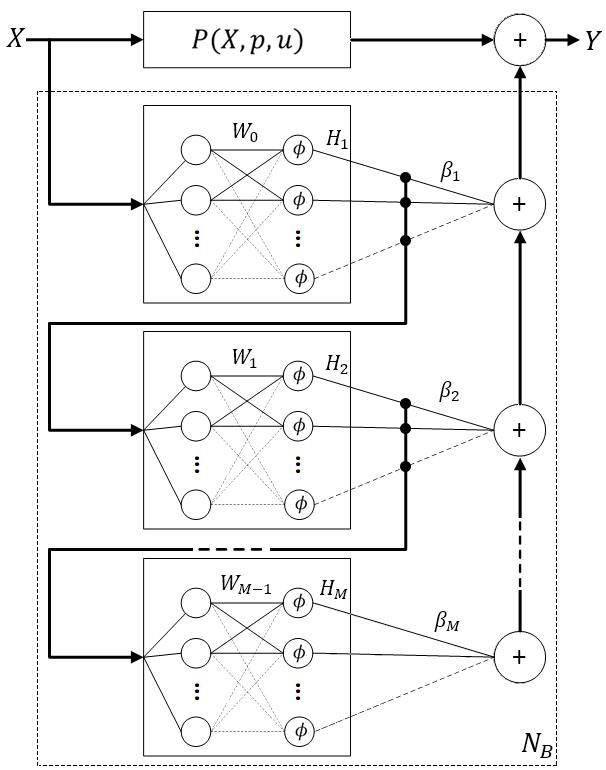}
    \caption{Visual representation of the SCM model
    }
    \label{fig:SCM_model}
\end{figure}
\subsection{Model Capacity}

Theoretically, randomized neural networks with a class of activation functions can approximate any continuous function with probability one \cite{471375}. Unfortunately, such a fundamental result cannot be used to direct the training process for successful data modelling, in particular, as the scope of random weights and biases is fixed\cite{8013920}. Indeed, the capacity of a randomized learner can be characterized by model complexity, which measures both the richness of the local extreme points and the average sum of the absolute values of the partial derivatives of the learner model over the domain. This section firstly introduces the model complexity concept, and uses it to guide the design of SCM.\\

\begin{definition}For an SCM, $F(X)=P(X,p,u)+S(X)$ and $S(X)=\sum_{k=1}^{M}\beta_k H_k(X)$, where $X\in[a,b]$, the model complexity (MC) is defined by
\begin{equation}\label{mc_1}
    MC(F)=Z(S')\int_{a}^{b}|S'(X)|dX
\end{equation}
where $S'$ represents the derivative of $S(X)$ and $Z(S')$ stands for the number of local extremums of S in $[a,b]$. 
For the multivariable case, we can generalize (\ref{mc_1}) as follows:
\begin{equation}
    MC(F)=Z(\nabla S)\int_{a_1}^{b_1}\cdots\int_{a_n}^{b_n}\sum_{i=1}^{n}\left|\frac{\partial S}{\partial X_i}\right|dX_1...dX_n
\end{equation}
where $\nabla S=\left( \frac{\partial S}{\partial X_1}, \frac{\partial S}{\partial X_2}, \cdots, \frac{\partial S}{\partial X_n} \right)$, $\frac{\partial S}{\partial X_i}$ represents the partial derivative of $S$ with respect to $X_i$, and  $Z(\nabla S)$ stands for the number of local extremums of S over $[a_1,b_1]\times...\times[a_n,b_n]$.\\
\end{definition}
\emph{\textbf{Remark 2:}} The MC concept is valid for any differentiable functions rather than SCM models, where the first term is purposely ignored. \\
\begin{definition}
Consider a real-valued differentiable function $f:\mathbb{R}^n\to\mathbb{R}$ defined over a compact set. 
An SCM learner model $F(X;\theta)$ (here $\theta$ denotes the set of model's parameters) is said to hold one-order universal approximation property if,
for any arbitrary $\varepsilon>0$, there exists a $\theta^*$ such that  $||F(X;\theta^*)-f(X)||<\varepsilon$ and $\left\|\frac{\partial F(X;\theta^*)}{\partial X_i}-\frac{\partial f(x)}{\partial X_i}\right\|<\varepsilon, i=1,2,...,n$.\\
\end{definition}
\begin{theorem}
Given a real-valued differentiable function $f(X)$ defined over $[a_1,b_1]\times [a_2,b_2]\times ... \times[a_n,b_n]$. An SCM model $F(X)$ does not hold the one-order universal approximation property to $f(X)$ if $MC(F) < MC(f)$.
\end{theorem}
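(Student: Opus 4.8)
The plan is to prove the contrapositive: if an SCM $F(X)=P(X,p,u)+S(X)$ \emph{does} hold the one-order universal approximation property to $f$, then necessarily $MC(F)\ge MC(f)$, which contradicts the hypothesis $MC(F)<MC(f)$. First I would fix an arbitrary $\varepsilon>0$ and use the property to obtain parameters $\theta^{*}$ with $\|F-f\|<\varepsilon$ and $\|\partial F/\partial X_i-\partial f/\partial X_i\|<\varepsilon$ for every $i=1,\dots,n$. Since the mechanism part $P$ is fixed — and by Remark~1 may be taken linear (so $\nabla P$ is constant) or absent — the identity $\partial S/\partial X_i=\partial F/\partial X_i-\partial P/\partial X_i$ shows that $S$ is itself a one-order $\varepsilon$-approximant of $f$ up to the explicitly controllable contribution of $P$; in the pure DeepSCN case ($P\equiv 0$) this is immediate. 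The task then reduces to showing that neither of the two factors that make up $MC$ — the integrated absolute gradient and the number of local extrema — can stay below the target's values as $\varepsilon\to 0$.

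For the integral factor I would use that the box $[a_1,b_1]\times\cdots\times[a_n,b_n]$ has finite volume $V$, so $L_2$-closeness of the partials upgrades to $L_1$-closeness, $\|\partial F/\partial X_i-\partial f/\partial X_i\|_{L_1}\le\sqrt{V}\,\varepsilon$; together with the elementary pointwise bound $\big||u|-|v|\big|\le|u-v|$ this yields
\[
\left|\int_{a_1}^{b_1}\!\!\cdots\!\int_{a_n}^{b_n}\sum_{i=1}^{n}\Big|\tfrac{\partial S}{\partial X_i}\Big|\,dX_1\cdots dX_n\;-\;\int_{a_1}^{b_1}\!\!\cdots\!\int_{a_n}^{b_n}\sum_{i=1}^{n}\Big|\tfrac{\partial f}{\partial X_i}\Big|\,dX_1\cdots dX_n\right|\le C\varepsilon,
\]
with $C$ depending only on $n$, $V$ and $\nabla P$. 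For the extremum-count factor I would argue that at each strict local extremum $X^{0}$ of $f$ there is a margin $\delta(X^{0})>0$ separating $f(X^{0})$ from the values of $f$ on a small sphere about $X^{0}$; once $\varepsilon$ is small enough the $C^1$-closeness forces $F$ (hence $S$, as $P$ is smooth with bounded gradient) to exhibit the same strict inequality pattern on that sphere, so $S$ has a local extremum there, and since distinct extrema of $f$ lie in disjoint neighbourhoods, $Z(\nabla S)\ge Z(\nabla f)\ge 1$. Because by Remark~2 the target's complexity carries no count factor, $Z(\nabla S)\ge 1$ is in fact all that is needed. Multiplying the two bounds gives $MC(F)=Z(\nabla S)\int\cdots\int\sum_i|\partial S/\partial X_i|\,dX\ge MC(f)-C\varepsilon$, and letting $\varepsilon\to 0$ delivers $MC(F)\ge MC(f)$. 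The single-variable statement is the case $n=1$, with $\nabla S$ replaced by $S'$ and the integral by $\int_a^b|S'|\,dX=\mathrm{TV}(S)$.

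The hard part will be the extremum-count step: $L_2$-convergence of $(F,\nabla F)$ to $(f,\nabla f)$ is by itself too weak to preserve the number of local extrema, since an arbitrarily small $L_2$ perturbation could flatten one out. To make the argument rigorous I would exploit that the approximants lie in a $C^1$ (indeed smooth, when $\phi$ is smooth) family over a compact box, so that $L_2$-small errors upgrade to uniformly small errors on the finitely many relevant neighbourhoods by equicontinuity, or else restrict attention to targets whose extrema are non-degenerate (Morse). A secondary subtlety is the asymmetric role of $P$ and $S$ inside $MC(F)$ — only $S$ is counted — which one handles by keeping $P$ linear, so its contribution to the integral is an explicit constant and to the extremum count is zero, or equivalently by reading the approximation target throughout as the residual $f-P$; in either case the core inequality $MC(F)\ge MC(f)-C\varepsilon$ is unchanged.
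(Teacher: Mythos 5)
Your proposal follows essentially the same route as the paper's proof: argue by contradiction from the one-order approximation property, write the target as the model minus a small error $e$, control the integral factor via the triangle inequality together with Cauchy--Schwarz over the box of volume $\prod_{i=1}^{n}(b_i-a_i)$, argue that the extremum count is unaffected, and let $\varepsilon\to 0$ to conclude $MC(f)\le MC(F)$. Two remarks. First, your claim that the target's complexity ``carries no count factor,'' so that $Z(\nabla S)\ge 1$ would suffice, rests on a misreading of Remark~2: the ``first term'' that is ignored for a general differentiable function is the mechanism model $P$, not the factor $Z$, and the paper's own computation keeps $Z(\nabla(S-e))$ as the count attached to $MC(f)$; fortunately you also sketch the stronger statement $Z(\nabla S)\ge Z(\nabla f)$, which is what is actually needed, so this does not break your argument. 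Second, the step you correctly single out as the hard part --- preserving the number of local extrema under a perturbation that is only small in $L_2$ --- is precisely the step the paper dispatches with ``it is easy to prove that $Z(\nabla S-\nabla e)=Z(\nabla S)$''; your proposed patches (upgrading to uniform closeness via equicontinuity of the approximant family on the compact box, or assuming nondegenerate extrema of $f$) address a gap that is present in the published argument as well, so on this point your write-up is, if anything, more careful than the original.
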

\begin{proof}
Let $F(X)=f(X)+e(X)$ be an SCM model that holds the one-order universal approximation property to $f(X)$. According to Definition 3.1, for any arbitrary $\varepsilon > 0$, we have
\begin{equation}
    ||e(X)||<\varepsilon 
   \text{ and }\left\|\frac{\partial e(X)}{\partial X_i}\right\|<\varepsilon, i=1,2,...,n.
\end{equation}
Therefore
\begin{equation}
\begin{split}
    MC(f) & = MC(F-e)\\
    & =Z(\nabla (S-e))\int_{a_1}^{b_1}\cdots\int_{a_n}^{b_n}\sum_{i=1}^n\left|\frac{\partial(F-e)}{\partial X_i}\right|dX_1 ... dX_n\\
    & = Z(\nabla S- \nabla e) \int_{a_1}^{b_1}\cdots\int_{a_n}^{b_n}\sum_{i=1}^n\left|\frac{\partial S}{\partial X_i}-\frac{\partial e}{\partial X_i}\right|dX_1 ... dX_n
\end{split}
\end{equation}
Notice that $\nabla e =\left( \frac{\partial e}{\partial X_1}, \frac{\partial e}{\partial X_2}, \cdots, \frac{\partial e}{\partial X_n}\right)$,
$\left\| \frac{\partial e}{\partial X_i} \right\|<\varepsilon$,
$i=1,2,...,n$, and $\varepsilon$ can be arbitrarily small. It is easy to prove that
\begin{equation}
    Z\left(\nabla S- \nabla e\right)=Z\left(\nabla S \right).
\end{equation}
Also, for any X, the following inequality holds, that is,
\begin{equation}
    \left| \frac{\partial S}{\partial X_i}-\frac{\partial e}{\partial X_i} \right| \le \left|\frac{\partial S}{\partial X_i} \right| + \left|\frac{\partial e}{\partial X_i} \right|, i=1,2,...,n.
\end{equation}
Thus, we get
\begin{equation}
\begin{split}
MC(f) & \le Z(\nabla S)\int_{a_1}^{b_1}\cdots\int_{a_n}^{b_n}\sum_{i=1}^n \left|\frac{\partial S}{\partial X_i} \right| dX_1...dX_n
+Z(\nabla S)\int_{a_1}^{b_1}\cdots\int_{a_n}^{b_n}\sum_{i=1}^n\left|\frac{\partial e}{\partial X_i}\right|dX_1...dX_n\\
& \le MC(F)+nZ(\nabla S)\left(\prod^n_{i=1}(b_i-a_i)\right)^{\frac{1}{2}}\sum_{i=1}^n\left(\int_{a_1}^{b_1}\cdots\int_{a_n}^{b_n} \left(\frac{\partial e}{\partial X_i} \right)^2 dX_1...dX_n\right)^{\frac{1}{2}}\\
& \le MC(F)+n^2\varepsilon  Z(\nabla S) \left( \prod^n_{i=1}(b_i-a_i)\right)^{\frac{1}{2}}
\end{split}
\end{equation}
Let $\varepsilon\to0$, we have
\begin{equation}
MC(f) \le MC(F)
\end{equation}
This contradicts and completes the proof.
\end{proof}

\begin{theorem}
Given a real-valued differentiable function $f:[a_1,b_1]\times [a_2,b_2]\times ... \times[a_n,b_n] \to \mathbb{R}$, and a class of SCM models with a differentiable activation function $\phi$. A necessity condition on the one-order universal approximation property of SCM with respect to f is that there exists at least one $F^*$ such that $MC(F^*)\ge MC(f)$.
\end{theorem}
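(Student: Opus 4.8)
The plan is to derive this statement directly as the contrapositive of the preceding theorem (Theorem 3.1), after first pinning down what ``the class of SCM models holds the one-order universal approximation property with respect to $f$'' should mean. I would take it to mean: for every $\varepsilon>0$ there is at least one SCM model $F$ in the class, with a suitable choice of its parameters $\theta$, such that $\|F(X;\theta)-f(X)\|<\varepsilon$ and $\|\partial F(X;\theta)/\partial X_i-\partial f(X)/\partial X_i\|<\varepsilon$ for all $i=1,\dots,n$ --- equivalently, at least one member of the class satisfies the conditions of the one-order universal approximation property (Definition 3.2).

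Next I would argue by contraposition. Assume the asserted necessity condition fails, i.e.\ every SCM model $F$ in the class satisfies $MC(F)<MC(f)$. Because the activation $\phi$ is differentiable, each such $F$ is differentiable, its gradient $\nabla S$ and the functional $MC(\cdot)$ are well defined, so Theorem 3.1 applies verbatim to each member of the class and tells us that no member holds the one-order universal approximation property to $f$. In particular there is an $\varepsilon_0>0$ for which no member of the class can simultaneously approximate $f$ and all its first-order partials to within $\varepsilon_0$. This is exactly the negation of the property described in the previous paragraph, so the class does not hold the one-order universal approximation property to $f$. Contraposing, if the class does hold this property, then there must exist at least one $F^*$ in the class with $MC(F^*)\ge MC(f)$, which is the claim.

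The only delicate point --- and the one I would spend the most care on --- is the change of quantifier level: Theorem 3.1 is phrased for an individual (parametrized) model, whereas the hypothesis of the present theorem concerns the entire family of SCM architectures at once. I would handle this by noting that the quantity governing approximability is $\sup_{F} MC(F)$ over the admissible members of the class; if this supremum is strictly below $MC(f)$ then Theorem 3.1 rules out \emph{every} candidate simultaneously, leaving nothing in the class capable of the required first-order approximation, whereas if the necessity condition holds the class at least clears the complexity threshold that Theorem 3.1 shows to be unavoidable. One also tacitly assumes $MC(f)<\infty$, which is automatic since $f$ is differentiable on the compact box and $Z(\nabla f)$ is taken to be finite. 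With this correspondence in place the argument is immediate and requires no further computation.
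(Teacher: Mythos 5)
Your proposal is correct and takes essentially the same route as the paper, which simply states that the result ``comes immediately from Theorem 3.1'' --- i.e.\ it is the contrapositive applied across the class of models. Your additional care in unpacking the quantifier shift from a single model to the whole family goes beyond what the paper writes down, but it is an elaboration of the same one-line argument rather than a different approach.
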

\begin{proof}
The consequence comes immediately from Theorem 3.1.
\end{proof}

\emph{\textbf{Remark 3:}} It is yet unclear if the condition in Theorem 3.2 is sufficient. Indeed, the theoretical results reported above are meaningful although the one-order universal approximation property is much stronger than the original (or termed as zero-order) universal approximation property. Further research in this direction is to extend the presented results to zero-order universal approximation property, that is, the relationship between the model's complexity and its representation power to nonlinear functions. Intuitively, if a learner model enables us to approximately express a given function or a data set, the degree of complexities from both the model and the data should fit with each other. For instance, one cannot expect a well-fit oscillatory curve by a linear regression model, regardless of whatever parameters we adjust. From this understanding, we make the following conjectures, which require more effort to rigorously prove. \\

\begin{conjecture}
Given a real-valued differentiable function $f:[a_1,b_1]\times [a_2,b_2]\times ... \times[a_n,b_n] \to \mathbb{R}$, and a class of SCM models with a differentiable activation function $\phi$. A necessity condition on the universal approximation property of SCM with respect to f is that there exists at least one $F^*$ such that $MC(F^*)\ge MC(f)$.\\
\end{conjecture}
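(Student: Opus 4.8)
The plan is to argue by contraposition, keeping the skeleton of the proof of Theorem 3.1 but replacing the derivative estimates — which are simply unavailable once we drop to the zero-order notion — by a lower-semicontinuity argument for the complexity functional. Suppose, towards a contradiction, that the SCM class has the (zero-order) universal approximation property with respect to $f$ while every admissible SCM model $F$ satisfies $MC(F)<MC(f)$. The UAP assumption supplies a sequence of SCM models $F_k$ with $\|F_k-f\|\to 0$ in $L_2$ on the box $[a_1,b_1]\times\cdots\times[a_n,b_n]$; since the box has finite measure, $F_k\to f$ also in $L^1$. The whole proof then reduces to establishing $\liminf_k MC(F_k)\ge MC(f)$, which already contradicts $MC(F_k)<MC(f)$ modulo the attainment issue discussed at the end.

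For the variation factor this is classical: the functional $S\mapsto\int_{a_1}^{b_1}\cdots\int_{a_n}^{b_n}\sum_{i=1}^{n}\bigl|\partial S/\partial X_i\bigr|\,dX_1\cdots dX_n$ is lower semicontinuous with respect to $L^1$ convergence (it is a total-variation seminorm), so $\liminf_k\int\sum_i|\partial F_k/\partial X_i|\ge\int\sum_i|\partial f/\partial X_i|$. The genuinely delicate factor is the extremum count $Z(\nabla\cdot)$, which is \emph{not} lower semicontinuous in isolation: a shallow local extremum of $f$ can be flattened out by an $L_2$-close competitor. The proposed remedy is a localization argument. After an arbitrarily small smooth perturbation of $f$ — justified by a Sard/Morse-type genericity reduction so that $f$ has finitely many nondegenerate critical points — cover the box by small disjoint cells, one around each critical point. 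On each cell an $L_2$-close $F_k$ must either carry a critical point of its own or else have collapsed a definite amount of oscillation there, and in the latter case that deficit is charged against the total-variation integral restricted to the cell. Adding these local bounds is meant to deliver the joint statement $\liminf_k\bigl(Z(\nabla F_k)\int\sum_i|\partial F_k/\partial X_i|\bigr)\ge Z(\nabla f)\int\sum_i|\partial f/\partial X_i|=MC(f)$. The case $n=1$ is the natural first step, where the integral is literally the total variation of $F_k$ and $Z$ is governed by the sign changes of $F_k'$, so the oscillation/variation trade-off is transparent.

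There remains the gap between ``$MC(F_k)>MC(f)-\eta$ for every $\eta$'' and ``$MC(F^*)\ge MC(f)$ for some $F^*$''. If the admissible class allows unbounded depth and width, one simply appends to a near-extremal $F_k$ one more hidden node with a small binary-weighted, least-squares-fitted contribution, tuned so the added oscillation pushes the complexity past $MC(f)$ while the $L_2$ error increases by less than the available slack; this produces the required $F^*$ (and in fact shows $\sup_F MC(F)=+\infty$, so attainment is automatic). If instead the class is fixed, one compactifies the remaining continuous parameters — the thresholds $\Theta_k$ on a bounded range after input normalisation, and the output weights $\beta$, which are bounded because they solve a least-squares problem over bounded data — extracts a convergent subsequence of $\{F_k\}$, and invokes lower semicontinuity of $MC$ along the limit to obtain an actual $F^*$ with $MC(F^*)\ge MC(f)$.

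The main obstacle is exactly the joint lower semicontinuity of the product $Z(\nabla S)\int\sum_i|\partial S/\partial X_i|$ under mere $L_2$ (hence $L^1$) convergence. The variation factor is well behaved, but the extremum-count factor is intrinsically unstable, so everything hinges on proving that any local extremum an approximant declines to reproduce is necessarily paid for by a commensurate loss of total variation. Making this trade quantitative and uniform — particularly for $n\ge 2$, where ``number of local extrema'' has to be read off the critical set of a generic function and arbitrarily small perturbations can split or merge critical points — is where the real difficulty sits, which is presumably why the authors record it only as a conjecture.
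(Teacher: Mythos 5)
The paper offers no proof of this statement: it is explicitly recorded as a conjecture, and Remark~3 states that the conjectures ``require more effort to rigorously prove.'' So there is no argument of the authors' to compare yours against; what can be assessed is whether your outline closes the gap they left open. It does not, and you partly say so yourself. The one ingredient that genuinely works is the $L^1$ lower semicontinuity of the variation factor $\int\sum_i|\partial S/\partial X_i|$. Everything else hinges on controlling $Z(\nabla F_k)$ from below, and here your outline has two concrete defects. First, the Sard/Morse ``genericity reduction'' perturbs $f$, but $f$ is the fixed target and $MC(f)$ is the quantity you must not lose: $Z(\nabla f)$ is violently discontinuous under arbitrarily small perturbations of $f$ (a flat piece or a degenerate critical point can be split into many extrema or erased entirely), so replacing $f$ by a Morse approximation changes the right-hand side of the inequality you are trying to prove by an uncontrolled amount. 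The degenerate case is not a technicality to be waved away --- it is precisely the case in which the conjecture's content is unclear, since the paper never defines how $Z$ counts non-isolated or degenerate extrema. Second, the claimed dichotomy on each cell (``either $F_k$ carries a critical point or the deficit is charged against the total variation'') is asserted, not proved, and the charging mechanism is the entire difficulty: with only $L^2$ closeness you have no pointwise or gradient control, and you would need a quantitative statement of the form ``missing an extremum of depth $\delta$ costs at least $c(\delta)$ in $L^2$ error or in variation,'' uniform over the cells, which is exactly the open problem. Note also that if each nondegenerate critical point of $f$ did force a nearby critical point of $F_k$ for large $k$, you would get $\liminf_k Z(\nabla F_k)\ge Z(\nabla f)$ outright and could conclude via $\liminf_k(a_kb_k)\ge(\liminf_k a_k)(\liminf_k b_k)$ for nonnegative sequences, with no joint trade-off needed; the trade-off formulation only becomes necessary in the degenerate regime your perturbation step tries to exclude.

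Two further points. Your final paragraph conflates two readings of the conclusion: the conjecture only asks for \emph{some} $F^*$ in the class with $MC(F^*)\ge MC(f)$, so if the class admits unbounded width your ``append one oscillating node'' construction would make the statement true for reasons having nothing to do with the universal approximation hypothesis, which suggests the intended reading is a fixed architecture class --- and then the compactness argument you sketch needs the parameter space to actually be compact and $MC$ to be upper semicontinuous along the extracted limit in the right direction, neither of which you establish. Finally, the paper's definition sets $MC(F)=Z(\nabla S)\int\sum_i|\partial S/\partial X_i|$ with $S=F-P$ (the mechanism/linear part is excluded), while $MC(f)$ is computed from $f$ itself; any complete argument has to account for the discrepancy $\nabla P$ introduces when you write $f=F-e$, an issue already glossed over in the paper's own proof of Theorem~3.1 and absent from your outline. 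In short: your proposal is a sensible research plan that correctly isolates the hard step, but the hard step is unresolved, so this remains a conjecture.
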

\begin{conjecture}
Given a real-valued continuous function $f:[a_1,b_1]\times [a_2,b_2]\times ... \times[a_n,b_n] \to \mathbb{R}$, and a class of SCM models with a bounded activation function $\phi$. A necessity condition on the one-order universal approximation property of SCM with respect to f is that there exists at least one $F^*$ such that $MC(F^*)\ge MC(f)$.
\end{conjecture}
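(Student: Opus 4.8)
The plan is to reduce the conjecture to an analogue of Theorem 3.1 for bounded activations and then invoke the same corollary step that produced Theorem 3.2: I would show that if an SCM model $F$ with bounded activation satisfies $MC(F)<MC(f)$ then $F$ cannot one-order universally approximate $f$, so that the existence of an $F^{*}$ with $MC(F^{*})\ge MC(f)$ is forced. What is genuinely new here, relative to Theorem 3.1, is that a bounded activation $\phi$ need not be differentiable — hence neither $F$ nor the functional $MC(F)$ is a priori meaningful — and that $f$ is only assumed continuous.

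First I would make the complexity functional well defined in this rougher class. On the box $\Omega=[a_{1},b_{1}]\times\cdots\times[a_{n},b_{n}]$ every pre-activation $W_{k}^{T}H_{k-1}(X)+\Theta_{k}$ stays in a fixed compact set $K$ on which $\phi$ is uniformly continuous, so I would mollify, $\phi_{\delta}=\phi*\rho_{\delta}\in C^{\infty}$, with $\|\phi_{\delta}\|_{\infty}\le\|\phi\|_{\infty}$ and $\sup_{K}|\phi_{\delta}-\phi|\to 0$ as $\delta\to 0$. Writing $F_{\delta}$ for the SCM model with $\phi$ replaced by $\phi_{\delta}$, one has $\|F_{\delta}-F\|_{L_{2}(\Omega)}\to 0$ and $F_{\delta}$ is smooth, so $MC(F_{\delta})$ is defined by Definition 3.1; I would then set $MC(F):=\liminf_{\delta\to 0}MC(F_{\delta})$ — equivalently, replace $\int|S'|$ by the total variation of the $S$-part on $\Omega$ and $Z(\nabla S)$ by a topological extremum count, both already defined for merely continuous functions and lower semicontinuous under uniform (hence $L_{2}$) convergence — and check via Theorem 3.1 that this agrees with the original definition when $\phi$ is differentiable. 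For $f$: once one-order approximation is assumed, the weak derivatives $\partial f/\partial X_{i}$ exist as the $L_{2}$-limits of $\partial F_{\delta}/\partial X_{i}$, so $f\in W^{1,2}(\Omega)$ and $MC(f)$ is read off its weak gradient.

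With the functionals in place, the estimate is the one from Theorem 3.1. Suppose $F$ has the one-order universal approximation property; working with $F_{\delta}$, for each $\varepsilon>0$ take $\|F_{\delta}-f\|<\varepsilon$ and $\|\partial F_{\delta}/\partial X_{i}-\partial f/\partial X_{i}\|<\varepsilon$, set $e_{\delta}=F_{\delta}-f$, write $f=P+(S_{\delta}-e_{\delta})$, and estimate $MC(f)=Z(\nabla(S_{\delta}-e_{\delta}))\int_{\Omega}\sum_{i}\left|\frac{\partial S_{\delta}}{\partial X_{i}}-\frac{\partial e_{\delta}}{\partial X_{i}}\right|dX \le Z(\nabla S_{\delta})\int_{\Omega}\sum_{i}\left|\frac{\partial S_{\delta}}{\partial X_{i}}\right|dX + Z(\nabla S_{\delta})\sum_{i}\int_{\Omega}\left|\frac{\partial e_{\delta}}{\partial X_{i}}\right|dX$, using $Z(\nabla(S_{\delta}-e_{\delta}))=Z(\nabla S_{\delta})$ and the triangle inequality. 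Cauchy–Schwarz gives $\int_{\Omega}|\partial e_{\delta}/\partial X_{i}|\,dX\le|\Omega|^{1/2}\|\partial e_{\delta}/\partial X_{i}\|_{L_{2}(\Omega)}<|\Omega|^{1/2}\varepsilon$, so $MC(f)\le MC(F_{\delta})+n^{2}\varepsilon\,Z(\nabla S_{\delta})|\Omega|^{1/2}$. Letting $\varepsilon\to 0$ along the approximants and then $\delta\to 0$, lower semicontinuity of the complexity functional yields $MC(f)\le MC(F)$, contradicting $MC(F)<MC(f)$; hence the necessity condition.

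The step I expect to be the real obstacle is $Z(\nabla(S_{\delta}-e_{\delta}))=Z(\nabla S_{\delta})$. The error $e_{\delta}$ is controlled only in $L_{2}$, along with its gradient in $L_{2}$, which carries no pointwise information, so a rough $e_{\delta}$ could in principle create or destroy many local extrema of $S_{\delta}-e_{\delta}$; this is precisely where the differentiability hypothesis on $\phi$ was doing real work in Theorem 3.1, and relaxing it to ``bounded'' is delicate. I see two routes: (i) strengthen the extraction, using boundedness of $\phi$ together with interior smoothing to show that the one-order approximant can always be chosen with $C^{1}$-small error on a slightly shrunken box, so the extremum count is stable; or (ii) retreat to a lower semicontinuous surrogate for $Z$ — the number of connected components of strict super/sub-level sets of $\nabla S$, or sign changes of $\nabla S$ counted with essential multiplicity — for which only $\liminf Z(\nabla(S_{\delta}-e_{\delta}))\ge Z(\nabla S)$ is needed, which is all the final inequality uses. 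A secondary, bookkeeping-level hurdle, already latent in Theorem 3.1, is converting ``$MC(f)\le MC(F_{\varepsilon})+c\varepsilon$ for a family of approximants'' into the clean statement ``$MC(F^{*})\ge MC(f)$''; I would sidestep it by proving the single-model contrapositive (if $MC(F)<MC(f)$ then $F$ lacks the property) and deducing the conjecture as its corollary, exactly as Theorem 3.2 follows from Theorem 3.1.
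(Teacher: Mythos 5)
This statement is not proved in the paper at all: it is explicitly labelled a conjecture, and Remark~3 states that the conjectures ``require more effort to rigorously prove.'' So there is no proof of record to compare yours against; what can be assessed is whether your attempt actually closes the question, and it does not. The construction you set up (mollify $\phi$, define $MC(F)$ as a $\liminf$ of $MC(F_\delta)$, rerun the Theorem~3.1 estimate, pass to the limit) is reasonable scaffolding, but the proof stops at exactly the step you flag, and neither of your two escape routes works as described. Route~(ii) in particular has the inequality pointing the wrong way: in the chain $MC(f)=Z(\nabla(S_\delta-e_\delta))\int_\Omega\sum_i|\partial(S_\delta-e_\delta)/\partial X_i|\,dX$, the quantity being bounded \emph{above} is $MC(f)$, so what is needed is the upper bound $Z(\nabla(S_\delta-e_\delta))\le Z(\nabla S_\delta)$, i.e.\ that the perturbation cannot \emph{create} extrema. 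Lower semicontinuity of the extremum count, $\liminf Z(\nabla(S_\delta-e_\delta))\ge Z(\nabla S_\delta)$, is the opposite direction and contributes nothing to the final estimate.

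The needed upper bound is, moreover, genuinely false at the level of generality you are working in. Control of $e_\delta$ and $\nabla e_\delta$ in $L_2$ gives no pointwise information, so $S_\delta-e_\delta$ can oscillate and acquire arbitrarily many local extrema; and even granting $C^1$-smallness of $e_\delta$ (your route~(i)), the count $Z$ is unstable wherever $\nabla S_\delta$ has degenerate zeros. This is not a pathological worry for SCMs: with the bounded, saturating activations the conjecture is aimed at (sign, hard limiter, saturated sigmoid/tanh/BReLU), $S$ is locally constant on open regions of the input box, and there $S-e$ inherits every local extremum of $e$, however small $e$ is in any norm. Any honest proof therefore has to either impose a nondegeneracy condition on the critical set of $S$, or replace $Z$ by a functional that is stable under small perturbations in the topology actually supplied by Definition~3.2 --- and that same issue already sits unaddressed inside the paper's proof of Theorem~3.1, where $Z(\nabla S-\nabla e)=Z(\nabla S)$ is asserted as ``easy to prove.'' Until that step is repaired, the statement remains a conjecture.
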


\subsection{Learning Algorithm}\label{alg_desc}
\begin{figure}[h]

  \centering
    \includegraphics[scale=0.7]{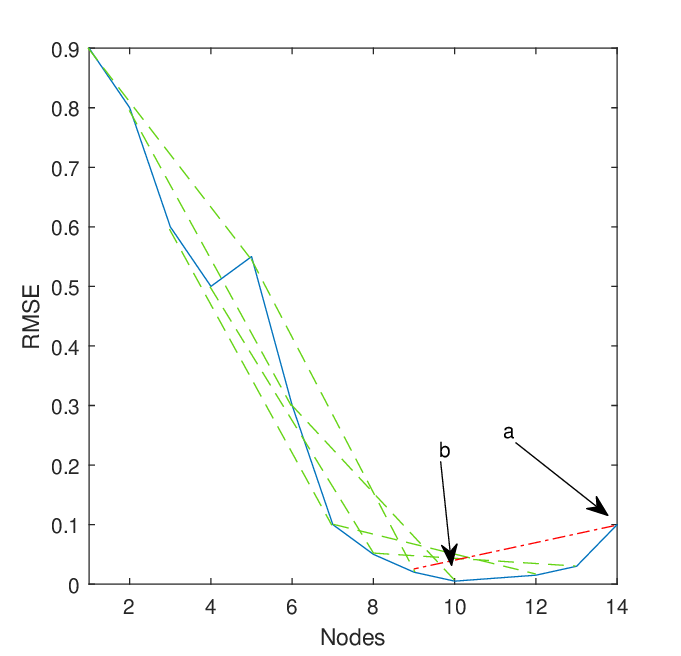}
    \caption{Early stopping demonstration }
    \label{fig:early stopping}
\end{figure}

Given a training dataset with $d$ features, $m$ outputs, $N$ training examples, that is, $X_t =[x_{t1}, x_{t2},..., x_{tN}], x_{ti}=[x_{ti,1},...,x_{ti,d}]^T \in \mathbb{R}^d$ and outputs $Y_t = [y_{t1}, y_{t2},..., y_{tN}], y_{ti}=[y_{ti,1},...,y_{ti,m}]^T \in \mathbb{R}^m$.
Then the weights of the linear regression model are given by $p^*=[p_{1}^*,...,p_{m}^*], p_i^*=[p_{i,1}^*,...,p_{i,d}^*]^T$. In our algorithm, these are found by using LASSO regression \cite{tibshirani1996regression}, that is,
\begin{equation} \label{eq:cognitive}
p_i^*=\argmin_{p_i}\left(\sum^N_{j=1}(y_{tj,i}-\sum^d_{k=1}x_{tj,k}p_{i,k})^2+\alpha \sum^d_{k=1}|p_{i,k}|\right), i=1,2,...,m
\end{equation}
The intercept term of each $p_i^*$ is given by $u^*=[u_{1}^*,...,u_{m}^*]$ by finding the mean of each $y_{t:,i}$, that is, $u_i=\frac{\sum^N_{j=1}y_{tj,i}}{N}$, $i=1,2,...,m$. 
Note that the determination of the weights is not limited to the linear regression model, any suitable linear models could be used, as demonstrated in Section 3.
The residual training error vector before adding the \emph{L-th} hidden node on the \emph{n-th} hidden layer is added, donated by  $\mathcal{E}_{L_n-1}^{(n)} = \mathcal{E}_{L_n-1}^{(n)}(X_t) = [\mathcal{E}_{L_n-1,1}^{(n)}(X_t),...,\mathcal{E}_{L_n-1,m}^{(n)}(X_t)]^T$, where $\mathcal{E}_{L_n-1,q}^{(n)}(X_t) = [\mathcal{E}_{L_n-1,q}^{(n)}(x_{t1}),...,\mathcal{E}_{L_n-1,q}^{(n)}(x_{tN})]^T \in  \mathbb{R}^N, q=1,2,...,m$.
Then, after adding the \emph{L-th} hidden node in the \emph{n-th} hidden layer, we can calculate the output of the \emph{n-th} hidden layer:
\begin{equation}\label{eq:hidden_out}
h_{L_n}^{(n)}:=h_{L_n}^{(n)}(X_t)= [\phi_{n,L_n}(x_1^{(n-1)}),...,\phi_{n,L_n}(x_N^{(n-1)})]^T 
\end{equation}
where $\phi_{n,L_n}(x_i^{(n-1)})$ is used to simplify $\phi_{n,L_n}(x_i^{(n-1)}, w_j^{(n-1)} , b_j^{(n-1)} )$ , and $x_i^{(0)}=x_i=[x_{i,1},...,x_{i,d}]^T, x_i^{(n-1)}=\Phi(x^{(n-2)}, W^{(n-1)}, B^{(n-1)})$ for $n \ge 2$. \\
Let $H_{L_n}^{(n)}=[h_1^{(n)},h_2^{(n)},...,h_{L_n}^{(n)}]$ represent the hidden layer output martix. A temporary variable $\xi_{L_n,q}^{(n)} (q=1,2,...,m)$ is introduced: 
\begin{equation}\label{eq:theta}
\xi_{L_n,q}^{(n)}=\frac{\langle \mathcal{E}_{L_n-1,q}^{(n)} , h_{L_n}^{(n)} \rangle^2}{\langle h_{L_n}^{(n)}, h_{L_n}^{(n)} \rangle}-(1-r)\langle \mathcal{E}_{L_n-1,q}^{(n)}, \mathcal{E}_{L_n-1,q}^{(n)}\rangle.
\end{equation}

\begin{algorithm*}
\caption{SCM with Early Stopping}
\label{alg1}
\SetKwInOut{Parameters}{Parameters}
\SetKwInOut{Returns}{Returns}
\SetKwInOut{Input}{Input}
\SetKwInOut{Output}{Output}

\SetKwFunction{FSCM}{SCM}
\SetKwFunction{FSC}{SCsearch}
\SetAlgoLined

\Input{ $\>$Training inputs $X_t = [x_{t1}, x_{t2},..., x_{tN}], x_{ti} \in \mathbb{R}^d$, Training outputs $Y_t = [y_{t1}, y_{t2},..., y_{tN}], y_{ti} \in \mathbb{R}^m$; 
       \tabto{2mm} Testing inputs $X_v = [x_{v1}, x_{v2},..., x_{vK}], x_{vi} \in \mathbb{R}^d$, Testing outputs $Y_v = [y_{v1}, y_{v2},..., y_{vK}], y_{vi} \in \mathbb{R}^m$; }
\Parameters{ $\>$Max hidden Layers $M$; Max Hidden Nodes per Layer $L_{max}^{(n)}=\{L|L=\infty\}, 1\leq n\leq M$; 
       \tabto{2mm} Error tolerance $\epsilon$; Candidates per Layer $T_{max}^{(n)}, 1\leq n\leq M$; 
       \tabto{2mm} Two sets of scalars $\Upsilon=\{\lambda_1,...,\lambda_{end} \}, \mathcal{R}=\{r_1,...,r_{end}\}$; 
       \tabto{2mm} Early stopping tolerance $\tau$; early stopping step $L_{step}$; LASSO regularization factor $\alpha$;
       }
\Output{ Weight scale $\Upsilon^*$, $\>$Output weights $\beta^*$, Hidden weights $w^*$, Hidden biases $b^*$, linear weights $p^*$ and intercept $u^*$}
\SetKwProg{Fn}{Function}{:}{}
\Fn{\FSCM{$X_t, T_t, X_v, T_v, M, \epsilon, L_{max}^{(n)}, T_{max}^{(n)}, \Upsilon
, \mathcal{R}, \tau, L_{step}, $}}{

 $\mathcal{E}_0^{(1)}:=[t_{t1}, t_{t2},..., t_{tN}]^T, \mathcal{H}:=[ ], \Omega:=[ ], W:=[ ]$, $a = d$\;
 \For{$i=1,2,...,m$}
 {
    Find $p_i^*$ by (\ref{eq:cognitive}) and $u_i^*$ by taking the mean of $y_{ti}$\;
 }
 
 \While{$n\le M$ and $||\mathcal{E}_{0 F}^{(1)}||>\epsilon$}
 {
    \While{$L_n\le L_{max}^{(n)}$ and $||\mathcal{E}_{0 F}^{(1)}||>\epsilon$}
    {
        \For{$\lambda\in\Upsilon$}
        {
            \For{$r\in\mathcal{R}$}
            {
                \For{$k=1,2,...,T_{max}^{(n)}$}
                {
                    Randomly assign values to $w_{rand}$ (from: binary $\{-1,1\}^a$ ) and  $b_{rand}$ (from: real $[-1,1]$)\;
                    $w_{L_n}^{n-1}$ = $\lambda\cdot w_{rand}$, \ 
                    $b_{L_n}^{n-1}$ = $\lambda\cdot b_{rand}$,
                    and calculate $h_{L_n}^{(n)}, \xi_{L_n,q}^{(n)}$ by (\ref{eq:hidden_out}) and (\ref{eq:theta})\;
                    
                    \uIf{$min\{\xi_{L,1}^{(n)},...,\xi_{L,m}^{(n)}\}>0$}
                    {
                        Save $w_{L_n}^{(n-1)}$ and $b_{L_n}^{(n-1)}$ in $W$, 
                        $\xi_{L_n}^{(n)}=\sum_{q=1}^{m} \xi_{L_{n,q}}^{(n)}
                         \ in\ \Omega$\;
                    }
                }
                \If{$W$ is not empty}
                {
                    Find $w_{L_n}^{(n-1)\star}$, $b_{L_n}^{(n-1)*}$ 
                    maximizing $\xi_{L,m}^{(n)}$, set
                    $H_{L_n}^{(n)}=[h_1^{(n)*},...,h_{L_n}^{(n)*}]$\;
                    Set $w_{L_n}^{(n-1)*}=1/\lambda \cdot w_{L_n}^{(n-1)\star}$, 
                    and store $\lambda$ in $\Upsilon_{L_n}^{(n-1)*}$\;
                    \bf{Break} (go to Line 23)\;
                }
            }
        }
        Set $\mathcal{H}:=[\mathcal{H}, H_{L_n}^{(n)}],\ \beta^*=\mathcal{H}^\dagger (Y_t^T-(X_t^Tp^*+I_1u^*)) ,\ 
        \mathcal{E}_{L_n}^{(n)}=\mathcal{H}\beta^*+X_t^Tp^*+I_1u^*-Y_t^T,\ \mathcal{E}^{(n)} = \mathcal{E}_{L_n}^{(n)},\ a=L_n $\;
        
        $E_{L_n}^{(n)}$ = $Y_v^T-(X_v^Tp^*+I_1u^*+\sum_{k=1}^{n}\beta^*_kH_k(X_v))$ for given $\Upsilon^*$, $\beta^*$, $\omega^*$, $b^*$, $p^*$, $u^*$\; 
        \If{$L_n>L_{step}$}
        {
            \If{$\frac{E_{L_n-L_{step}}^{(n)}-E_{L_n}^{(n)}}{E_{L_n}^{(n)}}\le\tau$}
            {
                    \Repeat{$\frac{E_{L_n-1}^{(n)}-E_{L_n}^{(n)}}{E_{L_n}^{(n)}}>\tau$}
                    {
                        \bf{Undo} line 23\;
                    }
                    \bf{break} (go to line 35)\;
            }
        
        }
        Renew $\mathcal{E}_0^{(1)}:=\mathcal{E}_{L_n}^{(n)}$, $L_n:=L_n+1$\;
    }
    Set $\mathcal{E}_0^{(n+1)}:=\mathcal{E}^{(n)}$, $\Omega:=[ ]$, $W:=[ ]$\; 
    Renew $n := n + 1$\;
 }
 \bf{Return} $\Upsilon^*$, $\beta^*$, $w^*$, $b^*$, $p^*$, $u^*$\;
}
\end{algorithm*}

In the constructive approach, the question of when to stop the addition of nodes and when to add the next hidden layer arises. In DeepSCN \cite{DBLP:journals/corr/WangL17c}, the approach is taken where the number of hidden nodes is set to a maximum and a new hidden layer is added when this maximum is reached or if no suitable candidate nodes can be found.
In this paper, we add a third condition for stopping adding nodes and adding a new layer. 
Given a validation dataset with K examples, with inputs $X_v =[x_{v1}, x_{v2},..., x_{vK}], x_{ti}=[x_{vi,1},...,x_{vi,d}]^T \in \mathbb{R}^d$ and outputs $Y_v = [y_{v1}, y_{v2},..., y_{vK}], y_{vi}=[y_{vi,1},...,y_{vi,m}]^T \in \mathbb{R}^m$.
The residual error vector after adding the \emph{L-th} hidden node on the \emph{n-th} hidden layer is donated by  $E_{L_n}^{(n)} = E_{L_n}^{(n)}(X_t)$.
A step size $L_{step}$ and a tolerance $\tau$ are used in the early stopping criterion. If the condition $\frac{E_{L_n-L_{step}}^{(n)}-E_{L_n}^{(n)}}{E_{L_n}^{(n)}}\le\tau$ $(L_n>L_{step})$ is met, then the nodes are iteratively removed until $\frac{E_{L_n-1}^{(n)}-E_{L_n}^{(n)}}{E_{L_n}^{(n)}}>\tau$  is satisfied and a new layer is added. The justification behind this is that if a testing local minimum is reached or the training does not have much of an effect on the testing, then the model may find a better representation using another layer. This, in turn, ideally improves the testing results and prevents the over-fitting phenomenon. Further, a step size is used as it is expected that the expressive power of a layer that has very few nodes is low and noise is present. This is illustrated in Figure \ref{fig:early stopping}, where $L_{step}=5$ and $\tau=0$, point (a) shows where condition $\frac{E_{L_n-L_{step}}^{(n)}-E_{L_n}^{(n)}}{E_{L_n}^{(n)}}\le\tau$ is true, and
point (b) shows from this node (node 14), nodes are iteratively removed until $\frac{E_{L_n-1}^{(n)}-E_{L_n}^{(n)}}{E_{L_n}^{(n)}}>\tau$. Hence, in this example, nodes 11,12,13 and 14 would be removed.\\

To calculate the output weights, let $\mathcal{H} = [H_{L_1}^{(1)}, H_{L_2}^{(2)},...,H_{L_M}^{(M)}]\in \mathbb{R}^{N\times \sum_{k=1}^ML_k}$ represent the output matrix of all hidden layer, where $L_k, k=1,2,...,M$, represents the number of nodes at the \emph{k-th} layer, respectively. Then the optimal solution $\beta^*$ is computed using the least squares method as follows:
\begin{equation}\label{eq:beta}
\beta^*= \argmin_{\beta}||\mathcal{H}\beta-(Y_t^T-(X_t^Tp^*+I_1u^*))||^2_F=\mathcal{H}^\dagger (Y_t^T-(X_t^Tp^*+I_1u^*))
\end{equation}
where $\mathcal{H}^\dagger$ is the Moore-Penrose generalised inverse \cite{lancaster1985theory} of the matrix $\mathcal{H}$, $I_1=[1,1,...,1]^T\in\mathbb{R}^N$ and $||\cdot||_F$ denotes the Frobenius norm.\\

The hidden weights in SCM are defined to be binary, and are scaled by the adaptive scope parameter $\lambda$, resulting in a floating point value. To reduce memory as $\lambda$ is the same value for a given node, the $\lambda$ value is stored for each node such that given the \emph{n-th} layer: 
\begin{equation}
\Upsilon_{n}^*=[\lambda^n_1,\lambda^n_2, ..., \lambda_{L_n}^n], 
\end{equation}
and in turn the weights can be stored in binary and are scaled only when fed forward such that:
\begin{equation}
W_{n}^* = \Upsilon_{n}^*\cdot w_{n}^* .
\end{equation}

\section{Performance Evaluation} \label{exper}
This section reports our results over both benchmark and real-world industrial datasets. Performance is evaluated on learning, generalization and efficiency. The models discussed are implemented in Matlab, and run on a PC with an Intel Core i7-3820 @ 3.6GHz and 32GB of ram. All models are implemented into a unified framework for comparison. Comparisons are made among SCN, DeepSCN, SCM, IRVFL, a deep version of IRVFL termed DIRVL-I and a deep version of IRVFL with a linear model termed DIRVFL-II. 

\subsection{Experimental Setup}
\begin{table}[htbp]
  \centering
  \caption{Summary of each algorithms features}
    \begin{tabular}{p{50pt}|p{35pt}|p{35pt}|p{45pt}|p{45pt}}
    
    \hline
    Algorithm & Deep & Early-Stopping & Linear \par Model & Supervisory \par Mechanism \bigstrut[b]\\
    \hline
    \hline
    SCN & N & N & N & Y \bigstrut[t]\\
    DeepSCN & Y & N & N & Y \\
    SCM & Y & Y & Y & Y \\
    IRVFL & N & N & N & N \\
    DIRVFL-I & Y & N & N & N \\
    DIRVFL-II & Y & Y & Y & N \\
    \hline
    \end{tabular}%
 
  \label{tab:alg_sum}%
\end{table}%

The framework builds the network one hidden node at a time for every algorithm. Before a node is added, the hidden weights are determined using a supervisory mechanism for SCN, DeepSCN and SCM, or purely randomly for IRVFL, DIRVFL-I and DIRVFL-II. The hidden weights can be constrained to binary $\{-1,1\}$ or real $[-1,1]$ values for all algorithms except SCM and DIRVFL-II, which are limited to binary. 
For SCN, DeepSCN, IRVFL and DIRVFL-I, the maximum number of hidden nodes is set to ensure results are reported within a reasonable time. However, SCM and DIRVFL-II use an early stopping method as outlined in Section. \ref{alg_desc}; hence, the maximum number of nodes is ideally set to infinity. For practical purposes, this is just set to a very large value. SCM and DIRVFL-II both use a mechanism model using lasso regression unless specified otherwise. Table.\ref{tab:alg_sum} shows the differences and similarities for convenience.

For Binary IRVFL, DIRVFL-I and DIRVFL-II the weights ($w$) are selected randomly from the values $\{-1, 1\}$, and the biases ($b$) are selected randomly from [-1,1] as 64 bit floating point numbers. For Binary SCN, DeepSCN and SCM the candidate weights ($w$) are selected randomly from the values $\{-1, 1\}$ and are multiplied by the scaling factor ($\lambda$); the biases ($b$) are selected randomly from $[-1,1]$ as 64 bit floating point numbers and are also multiplied by the scaling factor ($\lambda$) selected from $\{0.5,1,5,10,30,50,100\}$. The maximum number of candidate weights and biases is set based on the dataset described in later sections.

The activation functions used are all bounded and chosen based on performance, where the activations that appear to perform best on the dataset from experimentation are chosen. The activation functions used are:
 \begin{itemize}
\item Sigmoidal activation function given by $\phi(x)=\frac{1}{1+e^{-x}}$
\item Bounded Rectified Linear Unit\cite{bounded} (BRELU) given by $\phi(x)=min(max(0,x),A)=\left\{
\begin{array}{ll}
0 &  x\le 0 \\[-2pt]
x &  0<x\le A \\[-2pt]
A &  x>A
\end{array}
\right.$
 where we have used $A=1$.
\item Hyperbolic tangent function (tanh) given by $\phi(x)=tanh(x)$
\item Binary sign function given by 
$\phi(x)=\left\{
\begin{array}{ll}
-1 &  x\le 0 \\[-1pt]
1 &  x > 0 
\end{array}
\right.$
\item Hard limiter function given by 
$\phi(x)=\left\{
\begin{array}{ll}
1 &  x\ge 0 \\[-1pt]
0 &  x < 0 
\end{array}
\right.$
\end{itemize}
\subsection{Datasets}

Benchmark datasets include eight regression and two classification datasets either generated or downloaded from the UCI Machine Learning Repository\cite{Dua:2019} or Zalando\cite{xiao2017_online}. Three industrial datasets are used to
demonstrate SCM use in real-world applications. For all datasets, input attributes and output targets were all normalized between [0,1], and  randomly split into training and testing, at 90\% and 10\%, respectively.
\begin{itemize}
\item The regression task of R-DB1 is to predict the age of abalone from physical attributes, namely sex, length, diameter, height, weight (whole, shucked, viscera and shell) and rings. 
\item R-DB2's task is to predict the compressive strength of concrete from the components in the mixture, namely cement, blast furnace slag, fly ash, water, superplasticizer, course aggregate and fine aggregate and the age of the concrete. The dataset contains 1030 instances.
\item R-DB3 is a collection of properties of power plant running at full load, consisting of temperature, pressure, humidity and exhaust vacuum. The target of this dataset is the electrical energy output. The dataset contains 9500 instances.
\item R-DB4 consists of US Census Data of housing in Boston, Massachusetts, each point is a collection of houses in a town. Attributes are crime rate, residential land, non-retail buisness land, nitric oxide concentration, average rooms per dwelling, units built before 1940, distance to employment centres, access to highways, property tax rate, pupil-teacher ratio, blacks by town and percent lower status of population. The target is to predict the median price of the houses for a particular town. The dataset contains 450 instances.
\item The task of R-DB5 is to predict the popularity of a topic on the micro-blogging platform Twitter. The dataset contains 77 features, such that there are 11 primary features, each made up of 6 features that describe the feature through time. These primary features are the number of created discussions, author increase, attention level, burstiness level, number of atomic containers,  attention level measured with contributions, contribution sparseness, author interaction, number of authors, average discussion length, and the number of discussions. 
The dataset contains 38393 instances.
\item R-DB6 task is to predict the year of a song from audio features. 90 features describe the music based on timbre. The dataset contains 515,345 instances.
\item R-DB7 is a dataset based on the real value function \cite{tyukin2009feasibility} defined as
\begin{equation}
f(x)=0.2e^{-(10x-4)^2}+0.5e^{-(90x-40)^2}+0.3e^{-(80x-20)^2}
\end{equation}
The dataset contains 1000 instances generated from the uniform distribution [0,1].
\item R-DB8 is a dataset generated from the Rastrigin function \cite{10.1007/BFb0029787} defined as 
\begin{equation}
f(x)=An+\sum_{i=1}^{n}[x_i^2-A\cos{2\pi x_i}]
\end{equation}
where $A=10$, $x_i\in[-5.12,5.12]$. A $n$ of 2 is used, with 40000 training instances and 4489 testing instances. 
\item C-DB1 is the widely used MINST database. The target is handwritten digits from 28*28 pixel greyscale images. The dataset contains 70000 instances.
\item C-DB2 task is a database of 28*28 pixel greyscale images of fashion items. The target is the type of fashion item. The dataset contains 70000 instances.
\item The dataset I-DB1 \cite{LI2022677} was obtained from sensors from steel production plants using 12 different target thicknesses. 3163 samples are used, 2863 in the training and 300 in testing. Feature selection involves using grey relational analysis \cite{liu2017grey} to ascertain the most suitable inputs. The inputs include 8 roll gap measurements, entrance thickness, entrance temperature, exit temperature, strip width, 8 rolling force measurements, and 8 roll linear speed measurements. Further, 8 inputs are calculated based on the mechanism model for roll wear. The target is the measurement of the strip thickness. 
\item The dataset I-DB2 contains the parameters and the measured feedback signals of a servo control system. The measurements consist of the speed signal, the current from the motor and the current from the speed controller at 3 incremental points in time. The parameters are the servo target speed and the servo stiffness. The dataset's target is the error between the current from the motor and the current of the servo control system. The dataset is obtained by performing real-world testing with a load, where 8407 different target speed and stiffness configurations are tested. This dataset involves a very large amount of data, with 5,145,084 samples for training and 1,715,028 for testing.
\item The dataset I-DB3, similar to I-DB1, concerns hot-rolling of steel; however, it aims to predict the rolling force for plates with varying thicknesses. This dataset includes 14 input parameters, including entrance thickness, exit thickness, entrance width, rolling speed, temperature, various measurements of the content of particular elements, and plan view pattern control parameters. Notably, this dataset includes a mechanism model based on key parameters in the production process; this is used to demonstrate the benefit of a mechanism model used with SCM.
\end{itemize}

\subsection{Results for benchmark datasets}
The benchmark datasets are tested only using binary weights to demonstrate the differences between the algorithms with data reduction. \\

\begin{table*}[htbp]

  \centering
  \caption{Model parameter settings for benchmark datasets}
  \resizebox{\textwidth-60pt}{!}{
    \begin{tabular}{c|c|c|c|c|c|c|c}
    \hline
    Data Set & Algorithm & Layers & $L_{max}$ & Activations & $T_{max}$ & $L_{step}$ & $\tau$ \bigstrut[b]\\
    \hline
    \hline
    \multirow{6}[1]{*}{R-DB1} & SCN & 1 & 50 & S & 700 & - & - \bigstrut[t]\\
      & DeepSCN & 2 & 25,25 & S,S & 500,700 & - & - \\
      & SCM & 2 & $\infty$,$\infty$ & S,S & 500,700 & 10 & 0.001 \\
      & IRVFL  & 1 & 50 & S & - & - & - \\
      & DIRVFL-I & 2 & 25,25 & S,S & - & - & - \\
      & DIRVFL-II & 2 & $\infty$,$\infty$ & S,S & - & 10 & 0.001\bigstrut[b]\\
    \hline
    \multirow{6}[1]{*}{R-DB2} & SCN & 1 & 50 & T & 900 & - & - \\
      & DeepSCN & 5 & 20,20,20,20,20 & T,T,T,T,T & 500,600,700,800,900 & - & - \\
      & SCM & 5 & $\infty$,$\infty$,$\infty$,$\infty$,$\infty$ & T,T,T,T,T & 500,600,700,800,900 & 10 & 0.001 \\
      & IRVFL  & 1 & 50 & T & - & - & - \\
      & DIRVFL-I & 5 & 20,20,20,20,20 & T,T,T,T,T & - & - & - \\
      & DIRVFL-II & 5 & $\infty$,$\infty$,$\infty$,$\infty$,$\infty$ & T,T,T,T,T & - & 10 & 0.001 \bigstrut[b]\\
    \hline
    \multirow{6}[2]{*}{R-DB3} & SCN & 1 & 50 & T & 900 & - & - \bigstrut[t]\\
      & DeepSCN & 5 & 20,20,20,20,20 & T,T,T,T,T & 500,600,700,800,900 & - & - \\
      & SCM & 5 & $\infty$,$\infty$,$\infty$,$\infty$,$\infty$ & T,T,T,T,T & 500,600,700,800,900 & 10 & 0.001 \\
      & IRVFL  & 1 & 50 & T & - & - & - \\
      & DIRVFL-I & 5 & 20,20,20,20,20 & T,T,T,T,T & - & - & - \\
      & DIRVFL-II & 5 & $\infty$,$\infty$,$\infty$,$\infty$,$\infty$ & T,T,T,T,T & - & 10 & 0.001 \bigstrut[b]\\
    \hline
    \multirow{6}[2]{*}{R-DB4} & SCN & 1 & 25 & S & 700 & - & - \bigstrut[t]\\
      & DeepSCN & 3 & 20,20,20 & S,S,S & 500,600,700 & - & - \\
      & SCM & 3 & $\infty$,$\infty$,$\infty$ & S,S,S & 500,600,700 & 10 & 0.005 \\
      & IRVFL  & 1 & 25 & S & - & - & - \\
      & DIRVFL-I & 3 & 20,20,20 & S,S,S & - & - & - \\
      & DIRVFL-II & 3 & $\infty$,$\infty$,$\infty$ & S,S,S & - & 10 & 0.005 \bigstrut[b]\\
    \hline
    \multirow{6}[1]{*}{R-DB5} & SCN & 1 & 30 & S & 700 & - & - \bigstrut[t]\\
      & DeepSCN & 3 & 20,20,20 & S,S,S & 500,600,700 & - & - \\
      & SCM & 3 & $\infty$,$\infty$,$\infty$ & S,S,S & 500,600,700 & 10 & 0.001 \\
      & IRVFL  & 1 & 30 & S & - & - & - \\
      & DIRVFL-I & 3 & 20,20,20 & S,S,S & - & - & - \\
      & DIRVFL-II & 3 & $\infty$,$\infty$,$\infty$ & S,S,S & - & 10 & 0.001 \bigstrut[b]\\
    \hline
    \multirow{6}[1]{*}{R-DB6} & SCN & 1 & 50 & B & 1200 & - & - \\
      & DeepSCN & 3 & 30,30,30 & B,B,B & 1000,1100,1200 & - & - \\
      & SCM & 3 & $\infty$,$\infty$,$\infty$ & B,B,B & 1000,1100,1200 & 10 & 0.001 \\
      & IRVFL  & 1 & 50 & B & - & - & - \\
      & DIRVFL-I & 3 & 30,30,30 & B,B,B & - & - & - \\
      & DIRVFL-II & 3 & $\infty$,$\infty$,$\infty$ & B,B,B & - & 10 & 0.001 \bigstrut[b]\\
    \hline
    \multirow{6}[2]{*}{R-DB7} & SCN & 1 & 50 & T & 1100 & - & - \bigstrut[t]\\
      & DeepSCN & 2 & 50,50 & T,T & 1000,1100 & - & - \\
      & SCM & 2 & $\infty$,$\infty$ & T,T & 1000,1100 & 10 & 0.001 \\
      & IRVFL  & 1 & 50 & T & - & - & - \\
      & DIRVFL-I & 2 & 50,50 & T,T & - & - & - \\
      & DIRVFL-II & 2 & $\infty$,$\infty$ & T,T & - & 10 & 0.001 \bigstrut[b]\\
    \hline
    \multirow{6}[1]{*}{R-DB8} & SCN & 1 & 100 & T & 1000 &   & - \bigstrut[t]\\
      & DeepSCN & 10 & 50,...,50 & T,...,T & 100,200,...,900,1000 & - & - \\
      & SCM & 10 & $\infty$,..,$\infty$ & T,...,T & 100,200,...,900,1000 & 10 & 0.003 \\
      & IRVFL  & 1 & 100 & T & - & - & - \\
      & DIRVFL-I & 10 & 50,...,50 & T,...,T & - & - & - \\
      & DIRVFL-II & 10 & $\infty$,..,$\infty$ & T,...,T & - & 10 & 0.003 \bigstrut[b]\\
    \hline
    \multirow{6}[1]{*}{C-DB1} & SCN & 1 & 200 & T & 600 & - & - \\
      & DeepSCN & 3 & 100,100,100 & T & 400-500-600 & - & - \\
      & SCM & 3 & $\infty$,$\infty$,$\infty$ & T,T & 400-500-600 & 10 & 0.004 \\
      & IRVFL  & 1 & 200 & T & - & - & - \\
      & DIRVFL-I & 3 & 100,100,100 & T & - & - & - \\
      & DIRVFL-II & 3 & $\infty$,$\infty$,$\infty$ & T,T & - & 10 & 0.004 \bigstrut[b]\\
    \hline
    \multirow{6}[2]{*}{C-DB2} & SCN & 1 & 150 & R & 3000 & - & - \bigstrut[t]\\
      & DeepSCN & 3 & 80,80,80 & R,R,R & 2000-2500-3000 & - & - \\
      & SCM & 3 & $\infty$,$\infty$,$\infty$ & R,R,R & 2000-2500-3000 & 10 & 0.003 \\
      & IRVFL  & 1 & 150 & R & - & - & - \\
      & DIRVFL-I & 3 & 80,80,80 & R,R,R & - & - & - \\
      & DIRVFL-II & 3 & $\infty$,$\infty$,$\infty$ & R,R,R & - & 10 & 0.003 \bigstrut[b]\\
    \hline
    \multicolumn{4}{p{250pt}}{$^{\mathrm{a}}$ S = Sigmoid, T = Tanh, B = Binary Sign, R = Bounded ReLu }\\
    \end{tabular}%
    }
  \label{tab:SCM_settings}%
\end{table*}%
Table \ref{tab:SCM_settings} shows single-run configurations used in this section where the activations, $T_{max}$, $L_{step}$ and $\tau$ were selected from experimentation for suitable performance. 

\begin{table*}[htbp]
  \centering
  \caption{Benchmark regression results with binary weights}
    \begin{tabular}{p{30pt}|p{70pt}|p{100pt}p{100pt}}
    \hline
    Dataset & Algorithm & Training RMSE & Testing RMSE \bigstrut[b]\\
    \hline
    \hline
    \multirow{6}[2]{*}{R-DB1} & SCN  & \textbf{0.07318$\pm$0.00009} & 0.07419$\pm$0.00093 \bigstrut[t]\\
      & DeepSCN  & 0.07343$\pm$0.00019 & 0.07482$\pm$0.00680 \\
      & SCM  & 0.07477$\pm$0.00054 & \textbf{0.07327$\pm$0.00129} \\
      & IRVFL  & 0.07451$\pm$0.00015 & 0.07537$\pm$0.00150 \\
      & DIRVFL-I  & 0.07442$\pm$0.00023 & 0.07500$\pm$0.00185 \\
      & DIRVFL-II  & 0.07621$\pm$0.00099 & 0.07410$\pm$0.00139 \bigstrut[b]\\
    \hline
    \multirow{6}[2]{*}{R-DB2} & SCN  & 0.09060$\pm$0.00116 & 0.08238$\pm$0.00309 \bigstrut[t]\\
      & DeepSCN  & \textbf{0.05234$\pm$0.00196} & 0.06615$\pm$0.00511 \\
      & SCM  & 0.05544$\pm$0.00477 & \textbf{0.06393$\pm$0.00503} \\
      & IRVFL  & 0.10108$\pm$0.00346 & 0.09566$\pm$0.00557 \\
      & DIRVFL-I  & 0.09829$\pm$0.00479 & 0.10732$\pm$0.00926 \\
      & DIRVFL-II  & 0.09416$\pm$0.00923 & 0.09551$\pm$0.01176 \bigstrut[b]\\
    \hline
    \multirow{6}[2]{*}{R-DB3} & SCN  & 0.05543$\pm$0.00001 & 0.05562$\pm$0.00008 \bigstrut[t]\\
      & DeepSCN  & 0.05115$\pm$0.00021 & 0.05312$\pm$0.00047 \\
      & SCM  & \textbf{0.05060$\pm$0.00085} & \textbf{0.05261$\pm$0.00062} \\
      & IRVFL  & 0.05559$\pm$0.00005 & 0.05567$\pm$0.00013 \\
      & DIRVFL-I  & 0.05415$\pm$0.00025 & 0.05514$\pm$0.00039 \\
      & DIRVFL-II  & 0.05408$\pm$0.00050 & 0.05474$\pm$0.00068 \bigstrut[b]\\
    \hline
    \multirow{6}[2]{*}{R-DB4} & SCN  & 0.07041$\pm$0.00165 & 0.07743$\pm$0.00642 \bigstrut[t]\\
      & DeepSCN  & \textbf{0.05199$\pm$0.00199} & 0.06914$\pm$0.00868 \\
      & SCM  & 0.05402$\pm$0.00404 & \textbf{0.06439$\pm$0.01016} \\
      & IRVFL  & 0.09557$\pm$0.00365 & 0.10365$\pm$0.00927 \\
      & DIRVFL-I  & 0.07107$\pm$0.00341 & 0.08605$\pm$0.01331 \\
      & DIRVFL-II  & 0.07237$\pm$0.00863 & 0.07885$\pm$0.01313 \bigstrut[b]\\
    \hline
    \multirow{6}[2]{*}{R-DB5} & SCN  & 0.00301$\pm$0.00005 & 0.00287$\pm$0.00005 \bigstrut[t]\\
      & DeepSCN  & 0.00283$\pm$0.00012 & 0.00287$\pm$0.00013 \\
      & SCM  & \textbf{0.00208$\pm$0.00008} & \textbf{0.00203$\pm$0.00002} \\
      & IRVFL  & 0.00529$\pm$0.00067 & 0.00503$\pm$0.00077 \\
      & DIRVFL-I  & 0.00453$\pm$0.00086 & 0.00429$\pm$0.00088 \\
      & DIRVFL-II  & 0.00227$\pm$0.00004 & 0.00203$\pm$0.00002 \bigstrut[b]\\
    \hline
    \multirow{6}[2]{*}{R-DB6} & SCN  & 0.10571$\pm$0.00015 & 0.10588$\pm$0.00021 \bigstrut[t]\\
      & DeepSCN  & 0.10644$\pm$0.00018 & 0.10665$\pm$0.00024 \\
      & SCM  & \textbf{0.10339$\pm$0.00027} & \textbf{0.10365$\pm$0.00023} \\
      & IRVFL  & 0.12159$\pm$0.00102 & 0.12071$\pm$0.00088 \\
      & DIRVFL-I  & 0.12172$\pm$0.00114 & 0.12089$\pm$0.00095 \\
      & DIRVFL-II  & 0.10780$\pm$0.00005 & 0.10734$\pm$0.00005 \bigstrut[b]\\
    \hline
    \multirow{6}[2]{*}{R-DB7} & SCN  & 0.00369$\pm$0.00115 & 0.00402$\pm$0.00117 \bigstrut[t]\\
      & DeepSCN  & 0.00009$\pm$0.00005 & 0.00013$\pm$0.00008 \\
      & SCM  & \textbf{0.000009$\pm$0.00000} & \textbf{0.00002$\pm$0.00001} \\
      & IRVFL  & 0.06115$\pm$0.00042 & 0.06025$\pm$0.00059 \\
      & DIRVFL-I  & 0.04047$\pm$0.00368 & 0.04143$\pm$0.00339\\
      & DIRVFL-II  & 0.04465$\pm$0.00713 & 0.04507$\pm$0.00654 \bigstrut[b]\\
    \hline
    \multirow{6}[2]{*}{R-DB8} & SCN  & 0.12574$\pm$0.00002 & 0.12544$\pm$0.00002 \bigstrut[t]\\
      & DeepSCN  & 0.07973$\pm$0.00198 & 0.08313$\pm$0.00211 \\
      & SCM  & \textbf{0.03814$\pm$0.01575} & \textbf{0.04309$\pm$0.01582} \\
      & IRVFL  & 0.12608$\pm$0.00001 & 0.12556$\pm$0.00001 \\
      & DIRVFL-I  & 0.10564$\pm$0.00151 & 0.10806$\pm$0.00170 \\
      & DIRVFL-II  & 0.12128$\pm$0.00161 & 0.12179$\pm$0.00142 \bigstrut[b]\\
    \hline
    \end{tabular}%
  \label{tab:scm_regression}%
\end{table*}%

Table \ref{tab:scm_regression} shows the average RMSE and standard deviations given 100 independent trails. As can be seen for all datasets, SCN, DeepSCN, and SCM perform the best for the training of the network; however, in all cases, SCM outperforms all other methods for testing. This demonstrates SCM's greater ability in generalization over a range of different data. R-DB8's task is to model the Rastrigin function, which is often used to test optimization algorithms \cite{MUHLENBEIN1991619}. It can be seen in the table that SCM outperforms all other methods significantly in solving this complex problem. This is demonstrated further in Figure \ref{fig:Rastrigin_binary}, where SCN and DIRVFL-II do not even resemble the target function, clearly indicating that a deep and supervised approach is needed.\\

\begin{table*}[htbp]
  \centering
  \caption{Benchmark classification results with binary weights}
    \begin{tabular}{c|c|cccc}
    \hline
    Data Set & Algorithm & Training RMSE & Testing RMSE & Training Rate & Testing Rate \bigstrut[b]\\
    \hline
    \hline
    \multirow{6}[2]{*}{C-DB1} & SCN & 0.56897$\pm$0.0027543 & 0.56345$\pm$0.0031679 & 0.88986$\pm$0.0013843 & 0.89477$\pm$0.0026227 \bigstrut[t]\\
      & DeepSCN & 0.55303$\pm$0.004339 & 0.54947$\pm$0.0049877 & 0.87881$\pm$0.0018349 & 0.8814$\pm$0.0032453 \\
      & SCM & \textbf{0.53377$\pm$0.0069777} & \textbf{0.53338$\pm$0.0067238} & \textbf{0.90743$\pm$0.0041165} & \textbf{0.90689$\pm$0.0041004} \\
      & IRVFL & 0.64871$\pm$0.0028071 & 0.6445$\pm$0.0026194 & 0.8395$\pm$0.0026803 & 0.84677$\pm$0.0028682 \\
      & DIRVFL-I & 0.68919$\pm$0.0066998 & 0.68657$\pm$0.0087712 & 0.79282$\pm$0.0077074 & 0.7951$\pm$0.0099802 \\
      & DIRVFL-II & 0.61845$\pm$0.0031813 & 0.61779$\pm$0.0030957 & 0.85518$\pm$0.0017701 & 0.85966$\pm$0.0018861 \bigstrut[b]\\
    \hline
    \multirow{6}[2]{*}{C-DB2} & SCN & 0.59103$\pm$0.0018252 & 0.5933$\pm$0.001582 & 0.80827$\pm$0.0021895 & 0.80635$\pm$0.0051224 \bigstrut[t]\\
      & DeepSCN & 0.59001$\pm$0.0029765 & 0.59451$\pm$0.0033934 & 0.79312$\pm$0.0021397 & 0.78903$\pm$0.0023114 \\
      & SCM & \textbf{0.56687$\pm$0.0019226} & \textbf{0.56962$\pm$0.0019373} & \textbf{0.83297$\pm$0.0018681} & \textbf{0.83133$\pm$0.0021603} \\
      & IRVFL & 0.66076$\pm$0.0018948 & 0.66251$\pm$0.0023176 & 0.75826$\pm$0.0028859 & 0.75808$\pm$0.0049781 \\
      & DIRVFL-I & 0.69435$\pm$0.0050267 & 0.69735$\pm$0.0056538 & 0.71876$\pm$0.0031318 & 0.71572$\pm$0.0064232 \\
      & DIRVFL-II & 0.59338$\pm$0.0017315 & 0.59517$\pm$0.0017267 & 0.82039$\pm$0.00086767 & 0.81983$\pm$0.0014997 \bigstrut[b]\\
    \hline

    \end{tabular}%
  \label{tab:scm_classification}%
\end{table*}%
Further demonstrating SCM's ability to outperform other randomized algorithms, Table \ref{tab:scm_classification} shows the average RMSE and standard deviations given 50 independent trials on C-DB1 and C-DB2. This shows that SCM can outperform not only on regression problems but also large-scale of classification problems.  

\begin{figure} 

  \centering
    \includegraphics[width=\textwidth]{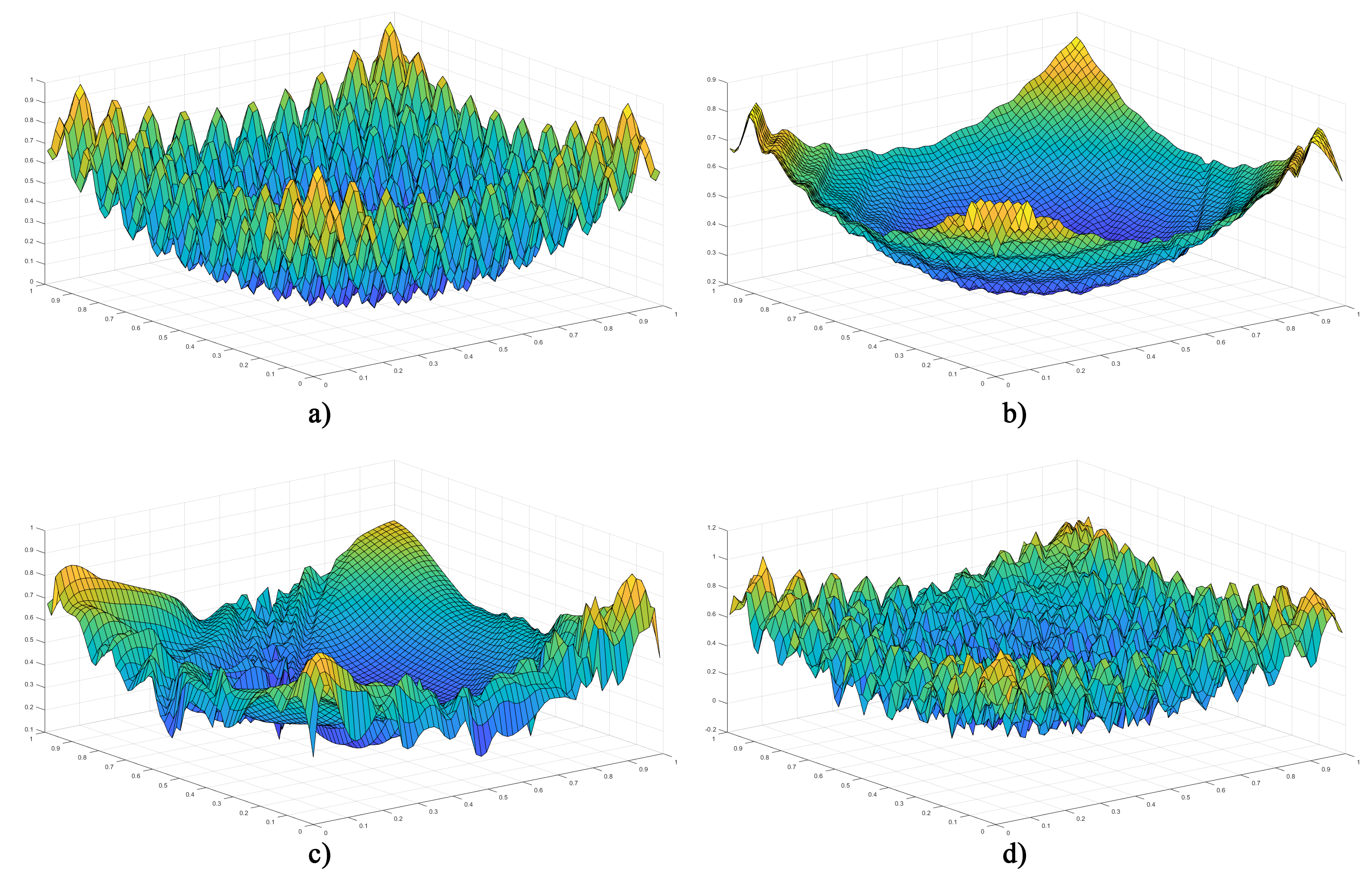}
   \caption{Target (a), and output of SCN(b), DIRVFL-II(c) and SCM(d), using binary weights. SCN stops searching at 58 nodes, DIRVFL results in a 19-32-10-8-10-15-22-11-10-27 network and SCM results in a 17-13-10-472-287-288-54-52-44-9 network. }
   \label{fig:Rastrigin_binary}
\end{figure}



\begin{table*}[htbp]
  \centering
  \caption{Model parameter settings for industrial datasets}
    \begin{tabular}{c|c|r|c|c|c|c|c}
    \hline
    Data Set & Algorithm$^{\mathrm{a}}$ & \multicolumn{1}{c|}{Layers} & $L_{max}$ & Activations$^{\mathrm{b}}$ & $T_{max}$ & $L_{step}$ & $\tau$  \bigstrut[b]\\
    \hline
    \hline
    \multirow{10}[2]{*}{I-DB1} & SCN (b) & 1 & 150 & T & 1200 & - & - \bigstrut[t]\\
      & SCN (r) & 1 & 150 & T & 1200 & - & - \\
      & DeepSCN (b) & 3 & 50,50,50 & T-T-T & 1000,1100,1200 & - & - \\
      & DeepSCN (r) & 3 & 50,50,50 & T-T-T & 1000,1100,1200 & - & - \\
      & SCM (b) & 3 & $\infty$,$\infty$,$\infty$ & T-T-T & 1000,1100,1200 & 10 & 0.00001 \\
      & IRVFL (b) & 1 & 150 & T & - & - & - \\
      & IRVFL (r) & 1 & 150 & T & - & - & - \\
      & DIRVFL-I (b) & 3 & 50,50,50 & T-T-T & - & - & - \\
      & DIRVFL-I (r) & 3 & 50,50,50 & T-T-T & - & - & - \\
      & DIRVFL-II (b) & 3 & $\infty$,$\infty$,$\infty$ & T-T-T & - & 10 & 0.00001 \bigstrut[b]\\
    \hline
    \multirow{10}[2]{*}{I-DB2} & SCN (b) & 1 & 40 & S & 50 & - & - \bigstrut[t]\\
      & SCN (r) & 1 & 40 & S & 50 & - & - \\
      & DeepSCN (b) & 2 & 20,20 & S-S & 30,50 & - & - \\
      & DeepSCN (r) & 2 & 20,20 & S-S & 30,50 & - & - \\
      & SCM (b) & 2 & $\infty$,$\infty$ & S-S & 30,50 & 10 & 0.01 \\
      & IRVFL (b) & 1 & 40 & S & - & - & - \\
      & IRVFL (r) & 1 & 40 & S & - & - & - \\
      & DIRVFL-I (b) & 2 & 20,20 & S-S & - & - & - \\
      & DIRVFL-I (r) & 2 & 20,20 & S-S & - & - & - \\
      & DIRVFL-II (b) & 2 & $\infty$,$\infty$ & S-S & - & 10 & 0.01 \bigstrut[b]\\
    \hline
    \multirow{10}[2]{*}{I-DB3*} & SCN (b) & 1 & 120 & H & 1000 & - & - \bigstrut[t]\\
      & SCN (r) & 1 & 120 & H & 1000 & - & - \\
      & DeepSCN (b) & 3 & 60,60,60 & H-H-H & 1000,1000,1000 & - & - \\
      & DeepSCN (r) & 3 & 60,60,60 & H-H-H & 1000,1000,1000 & - & - \\
      & SCM (b) & 3 & $\infty$,$\infty$ & H-H-H & 1000,1000,1000 & 10 & 0.00001 \\
      & IRVFL (b) & 1 & 120 & H & - & - & - \\
      & IRVFL (r) & 1 & 120 & H & - & - & - \\
      & DIRVFL-I (b) & 3 & 60,60,60 & H-H-H & - & - & - \\
      & DIRVFL-I (r) & 3 & 60,60,60 & H-H-H & - & - & - \\
      & DIRVFL-II (b) & 3 & $\infty$,$\infty$ & H-H-H & - & 10 & 0.00001 \bigstrut[b]\\
    \hline
    \multicolumn{4}{p{190pt}}{$^{\mathrm{a}}$ b = binary weights, r = real weights }\\
    \multicolumn{4}{p{210pt}}{$^{\mathrm{b}}$ S = Sigmoid, T = Tanh, H = Hard Limiter}\\
    \multicolumn{4}{p{250pt}}{* I-DB3 uses a industrial mechanism model for $P(X)$}\\
    \end{tabular}%
  \label{tab:SCM_industrial_settings}%
\end{table*}%

\subsection{Results for Industrial datasets}
Table \ref{tab:SCM_industrial_settings} shows the parameter settings selected for both industrial datasets based on experimentation for best performance. Notably, I-DB3 uses an industrial mechanism model based on hot-rolling parameters for $P(X)$ rather than LASSO regression, as used in the other datasets.
\begin{table}[h]
  \centering
  \caption{Performance comparison for industrial datasets}
    \begin{tabular}{p{30pt}|p{90pt}|p{90pt}p{90pt}}
    \hline
    \multicolumn{1}{l|}{Data Set} & Algorithm & Training RMSE & Testing RMSE \bigstrut[b]\\
    \hline
    \hline
    \multirow{10}[2]{*}{I-DB1} & SCN (binary) & 0.00321$\pm$0.00007 & 0.00395$\pm$0.00011 \bigstrut[t]\\
      & SCN (real) & 0.00268$\pm$0.00005 & 0.00327$\pm$0.00016 \\
      & DeepSCN (binary) &  0.00335$\pm$0.00013 & 0.00435$\pm$0.00027 \\
      & DeepSCN (real) & 0.00269$\pm$0.00009 & 0.00335$\pm$0.00016 \\
      & SCM (binary) & \textbf{0.00205$\pm$0.00021} & \textbf{0.00269$\pm$0.00022} \\
      & IRVFL (binary) & 0.00932$\pm$0.00089 & 0.00966$\pm$0.00093 \\
      & IRVFL (real) & 0.00579$\pm$0.00038 & 0.00605$\pm$0.00053 \\
      & DIRVFL-I (binary) & 0.01839$\pm$0.00254 & 0.01849$\pm$0.00265 \\
      & DIRVFL-I (real) & 0.01108$\pm$0.00190 & 0.01165$\pm$0.00195 \\
      & DIRVFL-II (binary) & 0.00498$\pm$0.00076 & 0.00511$\pm$0.00075 \bigstrut[b]\\
    \hline
    \multirow{10}[2]{*}{I-DB2} & SCN (binary) & 0.00793$\pm$0.00073 & 0.00771$\pm$0.00060 \bigstrut[t]\\
      & SCN (real) & 0.00690$\pm$0.00114 &  0.00678$\pm$0.00089 \\
      & DeepSCN (binary) & 0.00696$\pm$0.00078 & 0.00704$\pm$0.00085 \\
      & DeepSCN (real) & 0.00694$\pm$0.00110 & 0.00680$\pm$0.00097 \\
      & SCM (binary) & \textbf{0.00513$\pm$0.00060} & \textbf{0.00517$\pm$0.00032} \\
      & IRVFL (binary) & 0.01179$\pm$0.00183 & 0.01087$\pm$0.00199 \\
      & IRVFL (real) & 0.00953$\pm$0.00127 & 0.00855$\pm$0.00084 \\
      & DIRVFL-I (binary) & 0.01741$\pm$0.00330 & 0.01634$\pm$0.00393 \\
      & DIRVFL-I (real) & 0.01056$\pm$0.00194 & 0.00981$\pm$0.00163 \\
      & DIRVFL-II (binary) & 0.01083$\pm$0.00827 & 0.00207$\pm$0.00108 \bigstrut[b]\\
    \hline
    \multirow{10}[2]{*}{I-DB3} & SCN (binary) & 0.04907$\pm$0.00099 & 0.05791$\pm$0.00364 \bigstrut[t]\\
      & SCN (real) & 0.04634$\pm$0.00088 & 0.05603$\pm$0.00268 \\
      & DeepSCN (binary) & 0.04767$\pm$0.00078 & 0.06233$\pm$0.00275 \\
      & DeepSCN (real) & 0.04550$\pm$0.00111 & 0.06113$\pm$0.00281 \\
      & SCM (binary) & \textbf{0.02097$\pm$0.00045} & \textbf{0.02159$\pm$0.00047} \\
      & IRVFL (binary) & 0.08508$\pm$0.00322 & 0.08505$\pm$0.00368 \\
      & IRVFL (real) & 0.08414$\pm$0.00371 & 0.08286$\pm$0.00421 \\
      & DIRVFL-I (binary) & 0.08928$\pm$0.00482 & 0.09163$\pm$0.00606 \\
      & DIRVFL-I (real) & 0.09121$\pm$0.00527 & 0.09368$\pm$0.00604 \\
      & DIRVFL-II (binary) & 0.02333$\pm$0.00058 & 0.02293$\pm$0.00069 \bigstrut[b]\\
    \hline
    
    \end{tabular}%
  \label{tab:train_test_industrial}%
\end{table}
It can be observed that using real weights for SCN, DeepSCN, IRVFL, and DIRVFL does yield better performance results than using binary weights, which is expected given the greater expressive capabilities of real numbers. However, even given this, SCM with binary weights outperforms all other models, including those using real weights, demonstrating improved memory usage and performance. Further, for I-DB3 it is clear that an industrial mechanism model is helpful to improve the performance.\\

Figures \ref{fig:Servo_Testing} and \ref{fig:RC_Testing} show the RMSE value as the nodes are iteratively added to each model with a single run for I-DB1 and I-DB2, respectively. This clearly shows that SCM outperforms all other models at very few nodes, as the error decreases significantly and more rapidly. This demonstrates that if SCM needs to be further optimized by reducing the number of nodes, it can be achieved with improved performance. Further, in Figure \ref{fig:Servo_Testing}, we can observe some overfitting occurring with DIRVFL-I(binary), showing that RVFL models are more unreliable.\\

Figure \ref{fig:Servo_Output} shows the output values for SCM compared to the target and using only the linear model for the servo motor system dataset. 
The number of nodes depends on the early stopping parameters and the random values generated for the weights and biases; this approach results in variations in network size, as shown in Figure \ref{fig:Nodes_Per_Layer} where the nodes per layer are shown for 100 trails of I-DB2. This approach shows that the proposed learning algorithm can adapt and produce the most suitable network, given the random candidates.\\

\begin{table}[htbp]
  \centering
  \caption{Training execution time of hot rolling thickness dataset (I-DB1)}
    \begin{tabular}{c|c|p{13.43em}}
    \hline
    \multicolumn{1}{l|}{Data Set} & Algorithm & Training Execution Time (s) \bigstrut[b]\\
    \hline
    \hline
    \multirow{10}[2]{*}{I-DB1} & SCN (binary) & 24.4238$\pm$0.458007 \bigstrut[t]\\
      & SCN (real) & 25.9111$\pm$0.353106 \\
      & DeepSCN (binary) & 20.8576$\pm$0.38939 \\
      & DeepSCN (real) & 23.3514$\pm$0.372389 \\
      & SCM (binary) & 32.6294$\pm$7.325919 \\
      & IRVFL (binary) & 2.3975$\pm$0.070491 \\
      & IRVFL (real) & 2.3804$\pm$0.139402 \\
      & DIRVFL-I (binary) & 2.6282$\pm$0.227988 \\
      & DIRVFL-I (real) & 2.6052$\pm$0.120244 \\
      & DIIRVFL-II (binary) & 4.2571$\pm$2.874096 \bigstrut[b]\\
    \hline

    \end{tabular}%
  \label{tab:scm_execution_time}%
\end{table}%
\begin{table}[htbp]
  \centering
  \caption{Breakdown of execution time on servo motor data-set (I-DB2)}
    \begin{tabular}{l|l}
    \hline
    Algorithm Part & Time (s) \bigstrut[b]\\
    \hline
    \hline
    Testing & 2.20503$\pm$0.695 \bigstrut[t]\\
    Training & 2122.54$\pm$711.671 \\
    Lasso & 52.9417$\pm$5.39965 \\
    Candidate Search & 1146.3$\pm$315.698 \\
    Inequality Equation & 235.914$\pm$63.6531 \\
    Upgrade & 789.842$\pm$431.703 \\
    \hline
    \end{tabular}%
  \label{tab:servo_time}%
\end{table}%

\begin{table}[htbp]
  \centering
  \caption{SCM, SNN, DNN, DT performance and execution time on I-DB2}
    \begin{tabular}{l|ccc}
    \hline
    Algorithm & Training (RMSE) & Test (RMSE) & Training Time (s) \bigstrut[b]\\
    \hline
    \hline
    SCM & \textbf{0.005128} & \textbf{0.005173} & 2123 \bigstrut[t]\\
    SNN & 0.058206 & 0.058244 & 21396 \\
    DNN & 0.058309 & 0.058378 & 13263 \\
    DT & 0.005902 & 0.006467 & \textbf{1080} \\
    \hline
    \end{tabular}%
  \label{tab:SCM_SNN_DNN_DT}%
\end{table}%
The execution time of the hot rolling thickness dataset for training each model is reported in Table \ref{tab:scm_execution_time}. All implementations of IRVFL run significantly quicker, as they do not use a supervisory mechanism; in turn, the time saving is not significant, given the poor performance results. SCM does perform the slowest due to the extra features, but the time difference is not significant, and it can be observed that SCM has the highest standard deviation. \\

Table \ref{tab:servo_time} shows the breakdown of execution time testing and training SCM averaged from 100 trials. Whilst there is a lot of deviation between the values due to the variation of nodes in each model, it is clear that whilst testing, the model can perform in a reasonable amount of time. Testing takes only two seconds on average to do a complete pass of all testing samples, which means that, on average, a single sample takes around $1.28\mu s$ to complete a forward pass. During training, it is clear that the candidate search, on average, takes the longest due to the amount of iteration; however, this can be adjusted by adjusting the number of candidates for training if needed to reduce this time.\\

To investigate SCMs ability compared to existing algorithms, a single layer neural network (SNN), a deep neural network (DNN) and a decision tree (DT) are also tested on the I-DB2 dataset. Both the SNN and DNN both were created in Matlab using the Levenbergh-Marquardt \cite{more1978levenberg} training algorithm, tanh activation function for all hidden and output nodes and a maximum of 100 epochs. The SNN has 40 hidden nodes and the DNN consists of two layers with 20 nodes per layer. The DT is built using the Python 'CatBoost' \cite{DBLP:journals/corr/DorogushGGKPV17} library, which uses gradient boosting to build a decision tree. The performance and execution time of each algorithm are shown in Table \ref{tab:SCM_SNN_DNN_DT}. This table shows that SNN and DNN are unsuitable with similar-sized networks, as the performance and time are not practical. The DT performs worse than SCM but is quicker to train. It is clear that SCM is the preferred choice even with the longer training time, as the testing performance is significantly improved.\\

\begin{table*}[htbp]

  \centering
  \caption{Data reduction using binary weights in SCM}
    \begin{tabular}{p{45pt}|p{22pt}p{26pt}p{44pt}p{30pt}p{40pt}p{40pt}p{48pt}}
    \hline
    Network & Inputs & Outputs & Number of Weights & $\Upsilon^*$ Bits & {$w^*$ Bits (Binary)} & {$w^*$ Bits (Real$^{\mathrm{*}}$)} & {Model Size Reduction} \bigstrut[b]\\
    \hline
    \hline
    117-24-31 & 36 & 1 & 7764 & 11008 & 7764 & 496896 & 96.22\% \bigstrut[t]\\
    28 - 8 & 11 & 1 & 532 & 2304 & 532 & 34048 & 91.67\% \\
    \hline
    \multicolumn{4}{p{190pt}}{$^{\mathrm{*}}$ Real values are 64 bit floating point}
    \end{tabular}%
  \label{tab:data_reduction}%
\end{table*}%
From Table \ref{tab:data_reduction}, we can observe the amount of memory saved in the model using binary weights. Two example models using the two different industrial datasets both see a greater than 90\% reduction in data for storing the weights than using real values. 



\begin{figure} 

  \centering
    \includegraphics[width=0.75\textwidth]{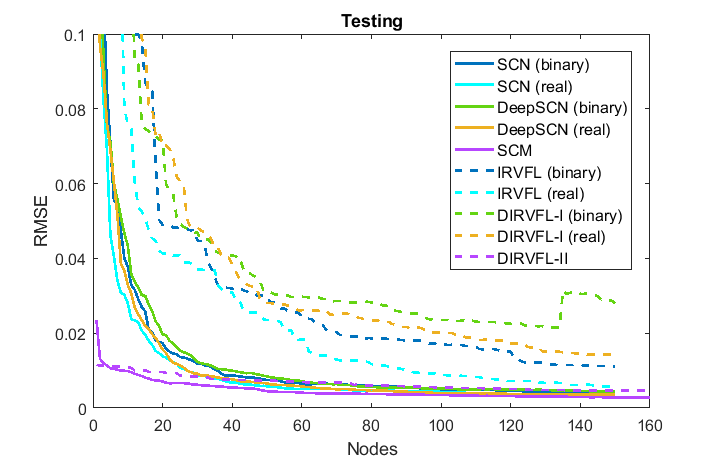}
   \caption{Testing RSME for each iteration on the hot rolling thickness dataset (I-DB1) }
   \label{fig:Servo_Testing}
\end{figure}

\begin{figure} 

  \centering
    \includegraphics[width=0.75\textwidth]{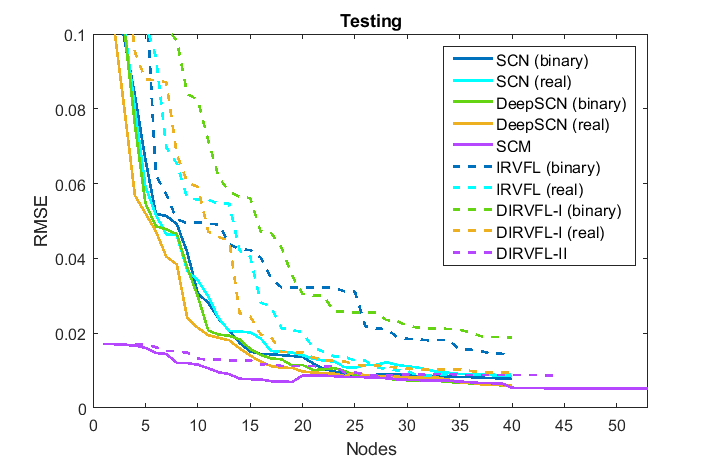}
   \caption{Testing RSME for each iteration on the servo motor dataset (I-DB2) }
   \label{fig:RC_Testing}
\end{figure}

\begin{figure} 

  \centering
    \includegraphics[width=1\textwidth]{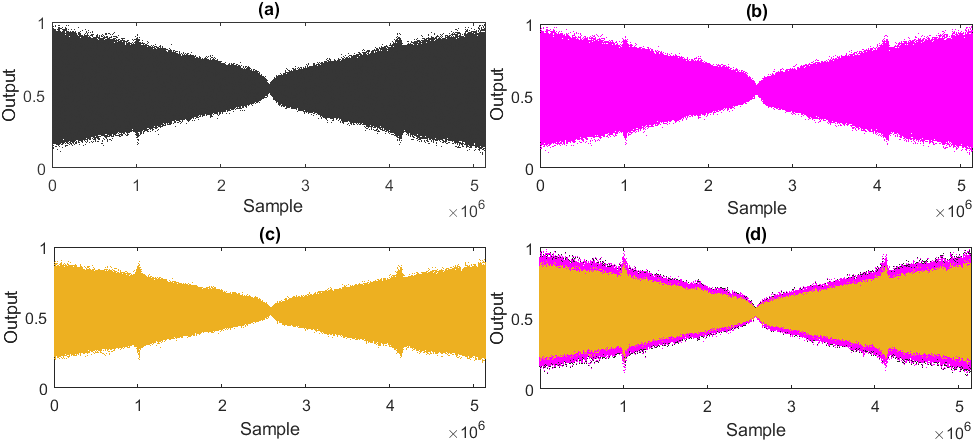}
   \caption{Training output of servo motor system dataset (I-DB2), where a) is the target, b) is the SCM output, c) is using only the linear model and d) shows the outputs combined}
   \label{fig:Servo_Output}
\end{figure}

\begin{figure*} 

  \centering
    \includegraphics[width=1\textwidth]{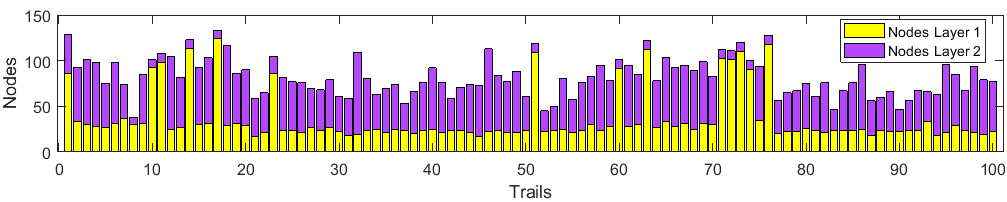}
   \caption{Nodes per layer using early stopping on servo motor system dataset (I-DB2)}
   \label{fig:Nodes_Per_Layer}
\end{figure*}

\begin{table}[htbp]
  \centering
  \caption{FPGA weight memory reduction using SCM}
    \begin{tabular}{p{38px}p{38px}p{38px}p{38px}p{38px}}
    \hline
    Weights (Real) & Weights (FPGA) & Bits (Real) & Bits (FPGA) & Memory Reduction \bigstrut\\
    \hline
    \hline
    660 & 16500 & 42240 & 16500 & 60.94\% \bigstrut\\
    \hline
    \end{tabular}%
  \label{tab:FPGA 2}%
\end{table}%

\begin{table}[htbp]
  \centering
  \caption{FPGA RMSE, time and power results using SCM}
    \begin{tabular}{p{35px}p{35px}p{35px}p{35px}p{35px}p{35px}p{35px}}
    \hline
    FPGA RMSE & PC RMSE &  Power & Clock cycles & Nodes & Activation & Time \bigstrut\\
    \hline
    \hline
    0.031360 & 0.031336 & 0.295W & 9 & 60 & SIGN & 90 ns \bigstrut\\
    \hline
    \end{tabular}%
  \label{tab:FPGA 1}%
\end{table}%

\subsection{Discussion}
This section gives some reasons why SCM outperforms other implementations of randomized models, including the implementations of RVFL, SCN and  DeepSCN. Notice that the universal approximation property can not be guaranteed for RVFL implementations. Hence, even if RVFL can perform reasonably on some datasets, this is not true for all datasets, making it unreliable and impractical, especially for complex and large industrial datasets. The results from DeepSCN and SCM demonstrate that deep models may be more effective and efficient in data modelling. Without the supervisory mechanism proposed in \cite{8013920}, we cannot expect much improvement on the modelling performance of RVFL models. \\

SCM is built on SCN and DeepSCN, utilising the supervisory mechanism which is necessary for a universal approximator. 
The key technical contributions and improvements that SCM provides can be summarised as follows:
\begin{itemize}
    \item The proposed early stopping feature, which is evaluated at training time,  has an advantage over most implementations of building a MLP model, in that a model can be built quicker, as it is not reliant on validation data to find a suitable number of nodes. This also has an advantage over SCN and DeepSCN in that $T_{max}$ need not be considered, saving time and effort in finding suitable configurations. 
    \item The proposed addition of the mechanism model P helps in speeding up learning and improving the accuracy as demonstrated in the results. This additional information passed to the model opens a door to understanding the predictive results from SCM models. Further, this unique addition can be extended to find suitable weights that are model-dependent such as those used in industry.
    \item SCN and RVFL both can use binary weights, as demonstrated in the results, and it is shown that SCN and RVFL generally do perform better using real weights. However, SCM demonstrates that whilst using binary weights, it outperforms all other methods. This clearly indicates that a low-memory model can be built and be accurate.
    \item SCM can replace commonly used algorithms such as SNN, DNN and DT with improved performance.
\end{itemize}
The proposed SCM algorithm can also be applied to FPGAs to exploit the high-speed hardware offered. The implementation used in this case study involves both binary weights and inputs, where the inputs are encoded using an encoding scheme. The outputs and mechanism weights are encoded using fixed-point notation. Moreover, the activation is limited to sign or step. The details of the implementation and encoding schemes are outlined in chapter 8; however, here, we demonstrate the power, accuracy, speed and memory saving using a hardware solution. In this case study, we use a XC7A100T-1CSG324C FPGA with a 100MHz clock. The dataset used is a subset of the servo motor dataset I-DB2 with 8568 samples for training and 2856 samples for testing. SCM is used to train the model on a PC and then the model is implemented on an FPGA and tested.\\

Table \ref{tab:FPGA 1} demonstrates that the RMSE of the PC model and FPGA model are very similar and that the FPGA model performs only slightly worse, in turn showing that SCM can be applied to FPGAs with very little loss in accuracy. The implementation takes nine clock cycles for a single input to be evaluated and for a 100MHz clock, this is 90ns, which is significantly faster than what could be achieved on a PC. The average power based on Vivados power analysis tool is 0.295W. Table \ref{tab:FPGA 2} shows the difference in bit memory using real 64-bit weights versus binary weights on the FPGA. A $60.94\%$ reduction in weight memory is reported, demonstrating that SCM on an FPGA does indeed enable compact models suitable for industrial application to be developed.    

\section{Conclusion}
In this paper, we present and mathematically formulate the model complexity (MC) concept in machine learning. With the help of such a significant concept, we show that a learner model has no power to approximately represent continuous signals and its derivative simultaneously if the difference of MC degrees between the model and the data is less than zero. As stated in remarks and conjectures, we have a long way to go in exploring models, conditions and the associations between MC and the zero-order universal approximation property. Anyway, this innovative concept greatly helps in understanding the capacity of a learner, guiding the development of learning algorithms and the hyper-parameter settings in deep learning. Various characterisations and properties of MC, and computing methods of MC for discrete cases can be further explored. 
Also, a reasonable and logical extension of MC concept nondifferentiable class of activation functions is being expected.\\

The proposed SCM model shows promise in resource-limited industrial applications. It uses less memory than other implementations of SCN whilst still outperforming in terms of accuracy. This memory saving can lead to many developments in automation, consumer goods, automotive industry and robotics. The addition of a mechanism model opens up a vast area to explore different industry models that can be applied to SCM. The benchmark dataset simulations show that SCM can be applied to both classification and regression problems. Further, SCM has demonstrated its applicability for large-scale industrial datasets. A hardware SCM can be implemented using an FPGA, allowing very fast and low-power models to be created, we will report more details in another paper. SCM provides evidence that low-memory SCN can be realised and can be the basis of further development, such as extending to hardware implementation, online training and low-level optimisations. SCM, as a new class of randomized learner models, can be regarded as a cornerstone of data modelling tools for industrial artificial intelligence. As an advanced lightweight deep learning model, SCM has great potential and value in edge-computing of the industrial internet.

\section*{Acknowledgment}
We are grateful to Professor Xu Li from Northeastern University, China, for sharing two industrial datasets (I-DB1 and I-DB3). Professor Dianhui Wang expresses thanks to his previous team members for the second industrial data collection in Nanjing, where the model complexity concept was created in his night dream at Cuiping International Complex, 2019.\\

\bibliographystyle{IEEEtran}
\bibliography{IEEEabrv,SCM}



\end{document}